\newtheorem{theorem}{Theorem}
\newtheorem{lemma}{Lemma}
\newtheorem{definition}{Definition}
\newcommand{\chen}[1]{\textcolor{black}{#1}}
\begin{document}
\title{Motif Graph Neural Network}
\author{Xuexin Chen,  Ruichu Cai$^\star$, Yuan Fang$^\star$, Min Wu$^\star$, Zijian Li, Zhifeng Hao

\thanks{This research was supported in part by National Key R$\&$D Program of China (2021ZD0111501),  National Science Fund for Excellent Young Scholars (62122022), Natural Science Foundation of China (61876043, 61976052), the major key project of PCL (PCL2021A12), Guangdong Provincial Science and Technology Innovation Strategy Fund (2019B121203012).}

\IEEEcompsocitemizethanks{
		\IEEEcompsocthanksitem Xuexin Chen is with the School of Computer Science, Guangdong University of Technology, Guangzhou 510006, China.
		E-mail: im.chenxuexin@gmail.com
		\IEEEcompsocthanksitem Ruichu Cai is with the School of Computer Science, Guangdong University of Technology, Guangdong Provincial Key Laboratory of Public Finance and Taxation with Big Data Application, Guangzhou, China and also with Peng Cheng Laboratory, Shenzhen 518066, China. 
		E-mail: cairuichu@gmail.com
		\IEEEcompsocthanksitem Yuan Fang is with the School of Computing and Information Systems, Singapore Management University, 178902, Singapore.
		E-mail: yfang@smu.edu.sg
		\IEEEcompsocthanksitem Min Wu is with the Institute for Infocomm Research (I$^{2}$R), A*STAR, 138632, Singapore.
        E-mail: wumin@i2r.a-star.edu.sg
		\IEEEcompsocthanksitem Zijian Li is with the School of Computer Science, Guangdong University of Technology, Guangzhou 510006, China.
		E-mail: leizigin@gmail.com
		\IEEEcompsocthanksitem Zhifeng Hao is with the College of Science, Shantou University, Shantou 515063, China. 
		Email: haozhifeng@stu.edu.cn
}
}




\markboth{Journal of \LaTeX\ Class Files,~Vol.~14, No.~8, August~2015}%
{Shell \MakeLowercase{\textit{et al.}}: Bare Demo of IEEEtran.cls for IEEE Journals}

\maketitle

\begin{abstract}
Graphs can model complicated interactions between entities, which naturally emerge in many important applications. These applications can often be cast into standard graph learning tasks, in which a  crucial step is to learn low-dimensional graph representations. Graph neural networks (GNNs)  are currently the most popular model in graph embedding approaches. However, standard GNNs in the neighborhood aggregation paradigm suffer from limited discriminative power in distinguishing \emph{high-order} graph structures as opposed to \emph{low-order} structures. To capture high-order structures, researchers have resorted to motifs and developed motif-based GNNs. However, existing motif-based GNNs still often suffer from less discriminative power on high-order structures. To overcome the above limitations, we propose Motif Graph Neural Network (MGNN), a novel framework to better capture high-order structures, hinging on our proposed motif redundancy minimization operator and injective motif combination. First, MGNN produces a set of node representations w.r.t. each motif. The next phase is our proposed redundancy minimization among motifs which compares the motifs with each other and distills the features unique to each motif. Finally, MGNN performs the updating of node representations by combining multiple representations from different motifs. In particular,  to enhance the discriminative power, MGNN utilizes an injective function to combine the representations w.r.t. different motifs. We further show that our proposed architecture increases the expressive power of GNNs with a theoretical analysis. We demonstrate that MGNN outperforms state-of-the-art methods on seven public benchmarks on both node classification and graph classification tasks.
\end{abstract}

\begin{IEEEkeywords}
Graph Neural Network, Motif, High-order Structure, Graph Representation
\end{IEEEkeywords}

\section{Introduction}\label{sec:intro}
Graphs are capable of modeling complex interactions between entities, which naturally emerge in many real-world scenarios.
Social networks, protein-protein interaction networks, and knowledge graphs are just a few examples, with many important applications in areas like social recommendation \cite{DBLP:conf/www/Fan0LHZTY19}, drug discovery \cite{sun2020graph}, fraud detection \cite{xu2021towards}, and particle physics \cite{shlomi2020graph}. These applications can often be cast into standard graph learning tasks such as node classification, link prediction, and graph classification, in which a crucial step is to learn low-dimensional graph representations.

\begin{figure}[t]
	\centering
	\includegraphics[width=\columnwidth]{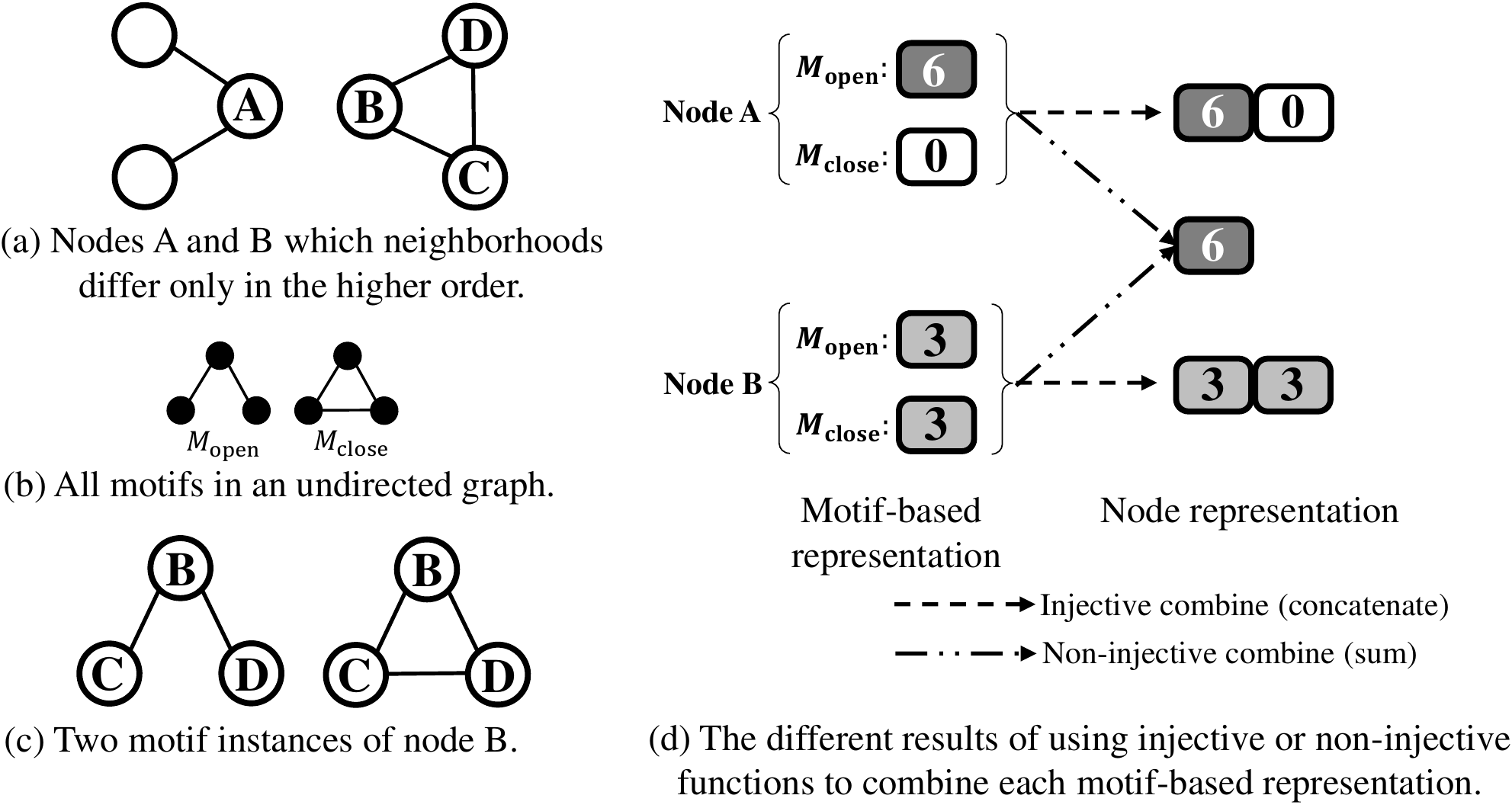} 
	\caption{Toy example of the discriminative power of GNNs on two nodes A and B with non-isomorphic neighborhoods.
	}
	\label{fig:example}
\end{figure}

Graph embedding approaches can be broadly categorized into graph neural networks (GNNs) \cite{DBLP:conf/iclr/KipfW17, DBLP:journals/corr/abs-1710-10903, hamilton2017inductive}, matrix factorization \cite{DBLP:conf/kdd/OuCPZ016, DBLP:conf/cikm/CaoLX15}  and skip-gram models \cite{DBLP:conf/kdd/GroverL16,DBLP:conf/kdd/PerozziAS14}. Among these, GNNs are currently the most popular model, largely owing to their ability of integrating both structure and content information through a message passing mechanism.
To be more specific, in the standard GNN architecture, the representation vector of a node is computed by aggregating and updating messages (i.e., features or representation vectors) from the node’s neighbors. The aggregation can be performed recursively by stacking multiple layers, to capture long-range node dependencies. 

However, standard GNNs in the neighborhood aggregation paradigm suffer from limited discriminative power in distinguishing \emph{high-order} graph structures consisting of the connections between neighbors of a node, as opposed to \emph{low-order} structures consisting of the connections between the node and its neighbors. For example, standard GNNs cannot distinguish between nodes A and B with non-isomorphic neighborhoods in Fig.~\ref{fig:example}(a), as their neighborhoods differ only in the higher-order. To capture high-order structures, researchers have resorted to motifs \cite{milo2002network, benson2016higher} and developed motif-based GNNs \cite{DBLP:conf/cikm/LeeRKKKR19,DBLP:conf/cikm/ZhaoZSL19,DBLP:conf/dsw/MontiOB18,DBLP:conf/icdm/SankarWKS20}. These approaches usually employ a motif-based adjacency matrix for each motif, which is constructed from the number of times two nodes are connected via an instance of the motif. Such motif-based adjacency matrices can better grasp the high-order structures. For example, given the open and closed motifs illustrated in Fig.~\ref{fig:example}(b), nodes A and B in Fig.~\ref{fig:example}(a) can be naturally distinguished by motif-based GNNs since node A is only associated with an open motif, whereas node B is associated with both open and close motifs.

However, existing motif-based GNNs often suffer from two problems. First, they overlook the \emph{redundancy} among motifs, which is defined as common edges shared by different motif instances. For example, in Fig.~\ref{fig:example}(c), the two motif instances of node B share two edges. When the redundancy is high enough, different motifs may become similar and lack specificity. Second, they often combine multiple motifs in a \emph{non-injective} manner, potentially resulting in less discriminative power on high-order structures. That is, a non-injective function, such as sum or mean, is used to combine different motifs, as shown in Fig.~\ref{fig:example}(d). In our example, node A has only an open motif with a feature valued 6, and node B has both open and close motifs each with a feature valued 3. However, when the motifs are combined by summing up their features, both nodes A and B would obtain the same feature representation of 6 and thus cannot be distinguished. Thus, the resulting node representations may also converge and further decrease the discriminative power.

To overcome the above limitations, we propose Motif Graph Neural Network (MGNN), a new class of GNN capable of distinguishing high-order structures with provably better discriminative power. 
From a model perspective, our MGNN follows the message passing mechanism, and its procedure is broken down into the following four phases. The first phase is motif instance counting. To form a motif-based adjacency matrix, we count the number of times two nodes co-occur in an instance of each motif. To capture comprehensive high-order graph structures, MGNN employs the motif-based adjacency matrices for all the possible motifs of size three, as opposed to some previous work with only one \cite{zhao2018ranking} or some motifs \cite{DBLP:conf/icdm/SankarWKS20}. The second phase is message aggregation. MGNN, like other motif-based  GNNs, 
aggregates node features (i.e., messages) on each motif-based adjacency matrix to produce different representations of the motifs. The first two phases are largely based on previous studies, except that we have employed all the motifs of size three to thoroughly capture high-order structures in an efficient manner. The third phase is the redundancy minimization among motifs. We address the challenge of redundancy among motifs by a proposed redundancy minimization operator, which compares the motifs with each other in terms of their representations, to distill the features specific to each motif. The fourth phase is the updating of node representations by combining multiple motifs. To improve the limited discriminative power of non-injective combinations, MGNN utilizes an injective function to combine motifs and update node representations. For example, to distinguish nodes A and B in Fig.~\ref{fig:example}(d), MGNN uses the injective concatenation to combine motif-based representations, so that the representation of node A is (6,0) and that of B is (3, 3), which can be differentiated apart. From a theoretical perspective, we show that MGNN is provably more expressive than standard GNN, and standard GNN is in fact a special case of MGNN.

We summarize our key contributions in the following.
\begin{itemize}
	\item We propose Motif Graph Neural Network (MGNN), a novel framework to better capture high-order structures, hinging on the motif redundancy minimization operator and injective motif combination. 
	\item We further show that our proposed architecture increases the expressive power of GNNs with a theoretical analysis.
	\item We demonstrate that MGNN outperforms existing standard or motif-based GNNs on seven public benchmarks on both node classification and graph classification tasks.
\end{itemize}

\section{Related Work}

Standard Graph Neural Networks follow the message passing paradigm to leverage node dependence and learn node representations. 
Different GNN models resort to different aggregation functions to aggregate the messages (i.e., features) for each node from its neighbors, and update its representation \cite{DBLP:conf/iclr/KipfW17, DBLP:journals/corr/abs-1710-10903, hamilton2017inductive, DBLP:conf/icml/GilmerSRVD17, DBLP:journals/corr/abs-1806-01261}\chen{, \cite{gan2022multigraph, he2022optimizing, gong2022self, guo2021bilinear}}. For example, Graph Convolutional Networks \cite{DBLP:conf/iclr/KipfW17} use mean aggregation to pool neighborhood information. 
Graph Attention Networks \cite{DBLP:journals/corr/abs-1710-10903} aggregate neighborhood information with trainable attention weights.
GraphSAGE \cite{hamilton2017inductive} uses mean, max or other learnable pooling function. 
Moreover, during aggregation, Message Passing Neural Networks \cite{DBLP:conf/icml/GilmerSRVD17} also incorporate edge information, while Graph Networks \cite{DBLP:journals/corr/abs-1806-01261} \chen{and multi graph fusion-based dynamic GCN~\cite{ gan2022multigraph}} further consider global graph information. \chen{In \cite{he2022optimizing} and \cite{gong2022self}, GNNs are developed to use an aggregation strategy based on Hilbert-Schmidt independence criterion and self-paced label augmentation strategy, respectively.} Some graph-level downstream tasks, such as graph classification, further employ a readout function to aggregate individual node representations into a whole-graph representation. The readout can be a simple permutation invariant function such as average and summation, while more sophisticated graph pooling methods have also been proposed,
including global pooling \cite{DBLP:journals/corr/VinyalsBK15,DBLP:journals/corr/LiTBZ15,zhang2018end},
hierarchical pooling \cite{DBLP:journals/pami/DhillonGK07,DBLP:conf/aaai/RanjanST20,diehl2019towards,DBLP:conf/icml/GaoJ19,DBLP:conf/nips/YingY0RHL18}. \chen{Besides graph-level downstream tasks, GNNs can also be used for the visual question answering task, etc~\cite{guo2021bilinear}.}
However, all these models are limited to only capturing low-order graph structures around every node.

However, standard GNNs are at most as powerful as the $1$-dimensional Weisfeiler-Leman ($1$-WL) graph isomorphism test \cite{DBLP:conf/iclr/XuHLJ19}, which implies that they cannot distinguish nodes with isomorphic low-order graph structures but different high-order structures. In other words, standard GNNs will always associate such nodes with the same representation. To improve the discriminative power of GNNs, it is a common practice to leverage high-order graph structures such as motifs \cite{milo2002network, benson2016higher}. 
In particular, motif-based GNN models use one \cite{zhao2018ranking, DBLP:conf/cikm/ZhaoZSL19}\chen{, \cite{DBLP:journals/corr/abs-2205-00867}} or more \cite{DBLP:conf/dsw/MontiOB18,DBLP:conf/icdm/SankarWKS20,DBLP:conf/bigdataconf/DareddyDY19,DBLP:journals/corr/abs-1711-05697,DBLP:conf/cikm/LeeRKKKR19}\chen{, \cite{wang2022graph}, \cite{yang2022graph}} motif-based adjacency matrices to perform message passing. When multiple motif-based adjacency matrices are used, the combination function w.r.t.~multiple motifs include summation \cite{DBLP:conf/dsw/MontiOB18}\chen{, \cite{wang2022graph}}, averaging \cite{DBLP:conf/icdm/SankarWKS20}, neighborhood aggregation \cite{DBLP:conf/bigdataconf/DareddyDY19}, fusion by a fully connected layer \cite{DBLP:journals/corr/abs-1711-05697}, selection by reinforcement learning \cite{DBLP:conf/cikm/LeeRKKKR19}, \chen{combination by a variant of recurrent neural network \cite{yang2022graph}}, and so on. However, all these previous combination functions are not injective to sufficiently differentiate higher-order structures.
\chen{Note that although the model in \cite{yang2022graph} does not employ an injective function, it still effectively captures the high-order structure of the nodes, through a strategy of encoding neighbor's features sequentially and a variant of the recurrent neural network to learn the node representations.}
Besides, all these models do not take into account the redundancy among motif instances. 

In another line, several studies \cite{maron2019provably, DBLP:conf/aaai/0001RFHLRG19, morris2020weisfeiler} attempt to extend the discriminative power of GNNs from 1-WL to $k$-WL, given that the higher the dimension of WL, the stronger the discriminative power. 
Like the standard GNNs, there is also a message propagation mechanism in these $k$-WL approaches 
where the difference is that the message is not propagated between nodes but between $k$-tuples (or a subgraph with $k$ nodes). 
Since their message propagation is not between nodes, they \cite{maron2019provably,DBLP:conf/aaai/0001RFHLRG19,morris2020weisfeiler} have the following shortcomings compared with our MGNN. 
First, 
they cannot generate node embeddings, but MGNN can, which limits their application to node-level tasks such as node classification. 
Second, their time complexity is higher than that of MGNN. The time complexity of MGNN is $\mathcal{O}(|\mathcal{V}|^2)$ (see Section~\ref{sec:model_train}), while their time complexity is $\mathcal{O}(|\mathcal{V}|^3)$ \cite{maron2019provably} or even $\mathcal{O}(|\mathcal{V}|^4)$ in the worst case  \cite{DBLP:conf/aaai/0001RFHLRG19,morris2020weisfeiler}. 
Third, \cite{DBLP:conf/aaai/0001RFHLRG19,morris2020weisfeiler} have a space complexity $\mathcal{O}(|\mathcal{V}|^3)$, which is also higher than $\mathcal{O}(|\mathcal{V}|^2)$ needed by MGNN. 


There are also several approaches employing high-order structures, in which each node receives messages from its multi-hop neighbors, such as MixHop~\cite{abu2019mixhop}, GDC~\cite{klicpera2019diffusion}, CADNet~\cite{lim2021class}, \chen{PathGCN~\cite{flam2021neural}, SE-aggregation~\cite{zhang2021learning} and MBRec~\cite{xia2022multi}}. However, like standard GNNs, they are \chen{typically} at most as powerful as the $1$-WL test in distinguishing graph structures~\chen{\cite{zhang2021learning}}.

\section{Preliminaries}

In this section, we introduce major notations and definitions of related concepts.

\subsection{Notations and Problem Formulation}
A graph is denoted by $G=(\mathcal{V}, \mathcal{E})$, with the set of nodes $\mathcal{V}$ and the set of edges $\mathcal{E}$. Let $\mathbf{A} \in \mathbb{R}^{|\mathcal{V}| \times |\mathcal{V}|}$ be the adjacency matrix of $G$, and $\mathbf{X} \in \mathbb{R}^{|\mathcal{V}| \times d_0}$ be the node feature matrix of G, where $i$-th row means the features of node $i$ denoted by $\mathbf{x}_i$. We use $(\mathbf{T})_{ij}$ to represent the element in the $i$-th row and the $j$-th column of a matrix $\mathbf{T}$, and $(\mathbf{T})_{i*}$ to represent all of the $i$-th row's elements.

In this paper, we investigate the problem of graph representation learning, which aims to embed nodes into a low-dimensional space. The node embeddings can be used for downstream tasks such as node classification, potentially in an end-to-end fashion.
Formally, a node embedding model is denoted by a function $\psi: \mathcal{V} \to \mathcal{H}$ that maps the nodes in $\mathcal{V}$ to $d$-dimensional vectors in $\mathcal{H} = \{\mathbf{h}_i \in \mathbb{R}^d|1\le i\le |\mathcal{V}| \}$, where $i$ denotes the index of the nodes.

\begin{figure}[t!]
	\centering
	\includegraphics[width=0.5\columnwidth]{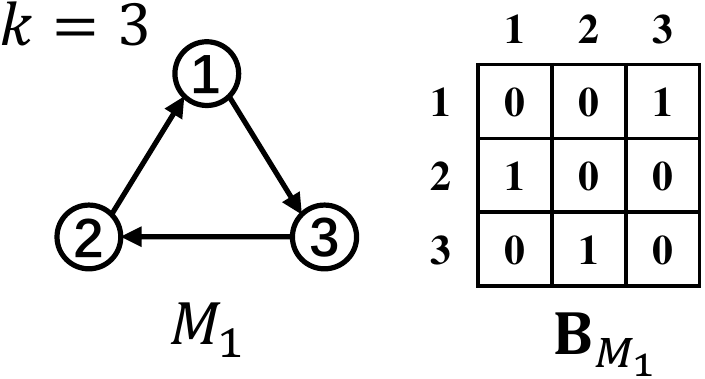} 
	\caption{An example of a 3-node ($k=3$) network motif, along with its adjacency matrix $\mathbf{B}_{M_1}$.
	}
	\label{fig:motif_def}
\end{figure}

\begin{figure}[t!]
	\centering
	\includegraphics[width=\columnwidth]{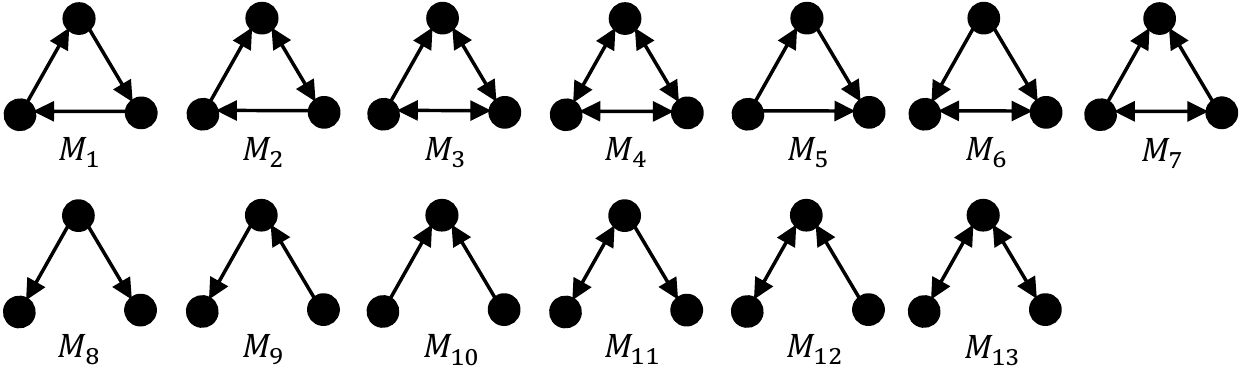} 
	\caption{All 3-node motifs in a directed and unweighted graph.}
	\label{fig:motif}
\end{figure}

\subsection{Motif and Motif-based Adjacency Matrix}
We work with directed motifs because they allow us to describe more complex structures. Specifically, we first introduce the definition of motif~\cite{milo2002network, zhao2018ranking, benson2016higher} as follows.

\begin{definition}\label{def:motif} (Network motif). 
A motif $M$ is a connected graph of $n$ nodes ($n>1$), with a $n \times n$ adjacency matrix $\mathbf{B}_M$ containing binary elements $\{0,1\}$.
\end{definition}

An example of 3-node motif is given in Fig.~\ref{fig:motif_def}.  
In particular, a motif with three or more nodes (i.e., $n \ge 3$) can capture high-order graph structures. Among them, w.r.t.~a given node, the high-order structure captured by its motifs with $n > 3$ nodes (i.e., not only its edges incident to its neighboring nodes but also the edges between its neighboring nodes), can be similarly captured by multiple $3$-node motifs. Thus, the given node's 3-node motifs have sufficient capacity for structures.
As shown in Fig.~\ref{fig:motif}, we enumerate a total of thirteen 3-node motifs. Therefore, we only utilize motifs with $n=3$ nodes in this work.

Given the above motif definition, we can further define the set of motif instances as follows.

\begin{definition}\label{def:motif_instance} (Motif instance).
Consider an edge set $\mathcal{E}'$ and the subgraph $G[\mathcal{E}']$ induced from $\mathcal{E}'$ in $G$. If $G[\mathcal{E}']$ and a motif $M_k$ are isomorphic \cite{babai2018group}, written as $M_k \simeq G[\mathcal{E}']$, then
\begin{equation*}
    m(\mathcal{E}') = \{(\mathbf{x}_{u}, \mathbf{x}_{v}) \big| (u, v) \in \mathcal{E}' \}
\end{equation*}
is an \emph{instance} of the motif $M_k$,
where $u, v$ are two adjacent nodes that form an edge in $\mathcal{E}'$, and $\mathbf{x}_{u}$ means the $u$-th row of $\mathbf{X}$ (i.e., the feature vector of node $u$).
\end{definition}
For example, a motif instance of $M_1$ in Fig.~\ref{fig:motif_def} is $\{(\mathbf{x}_1, \mathbf{x}_3), (\mathbf{x}_2, \mathbf{x}_1),  (\mathbf{x}_3, \mathbf{x}_2) \}$.

\begin{definition}\label{def:motif_instance_set} (Motif instance set). On a graph $G=(\mathcal{V}, \mathcal{E})$, the set of instances of motif $M_k$, denoted as $\mathcal{M}_k$, is defined by 
\begin{equation*}
\mathcal{M}_k = \{ m(\mathcal{E}') | \mathcal{E}' \subseteq \mathcal{E}, |\mathcal{E}'|=r, M_k \simeq G[\mathcal{E}'] \},
\end{equation*}
where $\mathcal{E}' \subseteq \mathcal{E}, |\mathcal{E}'|=r$ denotes the set of all $r$-combinations of the edge set $\mathcal{E}$, and $|\mathcal{E}'|=r$ is the number of edges in the motif $M_k$. 
\end{definition}

Based on the motif instances, the definition of the motif-based adjacency matrix is given as follows.
\begin{definition}\label{def:motif_adj} (Motif-based adjacency matrix). 
Given a motif $M_k$ and its set of instances $\mathcal{M}_k$, the corresponding motif-based adjacency matrix $\mathbf{A}_k$ is defined by
\begin{equation}\label{equ:motif_adj}
(\mathbf{A}_k)_{ij} = \sum_{m \in \mathcal{M}_k} \mathbb{I}((\mathbf{x}_i, \mathbf{x}_j) \in m),
\end{equation}
where $\mathbb{I}(\cdot)$ is an indicator function, i.e., $\mathbb{I}(x)=1$ if the statement $x$ is true and $0$ otherwise.
\end{definition}
Intuitively, $(\mathbf{A}_k)_{ij}$ is the number of times two nodes $i$ and $j$ are connected via an instance of the motif $M_k$.



\begin{figure*}[t!]
	\centering
	\includegraphics[width=2.\columnwidth]{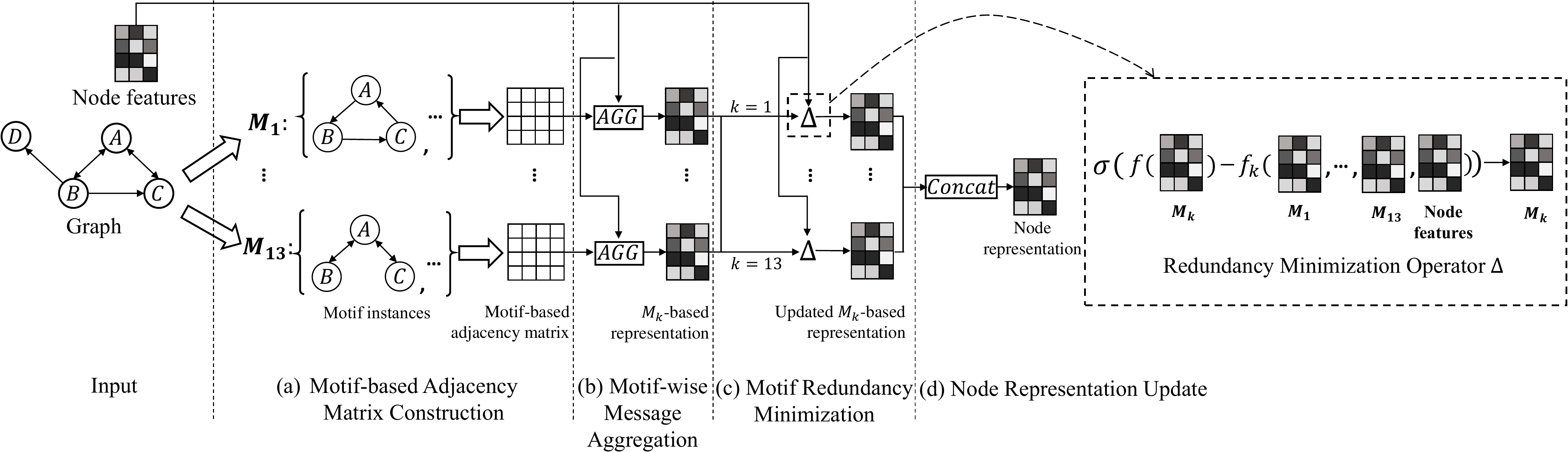} 
	\caption{Overview of a MGNN layer. The MGNN layer takes the graph and its node features matrix as inputs, throughout the procedure (a) (b) (c) (d), and finally outputs the node representation matrix that captures the high-order graph structure. The first two phases are the generation of $M_k$-based representation of nodes. The key idea of phase (c) is to compare the motifs with each other and distill the features specific to each motif. In phase (d), updated $M_k$-based representations are combined by an injective concatenation operation.
	}
	\label{fig:model}
\end{figure*}

\section{Proposed Approach}\label{sec:approach}
In this section, we introduce the proposed approach. We first present an overall framework of our approach, followed by its four phases in detail. Finally, we discuss the overall objective function for model training.

\subsection{Overall Framework}
We propose Motif Graph Neural Network (MGNN) that can model high-order structures with provably better discriminative power. Specifically, our MGNN follows a message passing mechanism, and its procedure is broken down into the following four phases. 

The first phase involves the construction of a motif-based adjacency matrix, as shown in Fig.~\ref{fig:model}(a). Given a motif, its motif-based adjacency matrix captures the number of times each pair of nodes are connected via an instance of the motif. Thus, we need an efficient counting algorithm for motif instances. In MGNN, we consider all 13 motifs of size three, namely $M_1,M_2,\ldots,M_{13}$ given by Fig.~\ref{fig:motif}, and correspondingly construct 13 motif-based adjacency matrices $\mathbf{A}_1,\mathbf{A}_2,\ldots,\mathbf{A}_{13}$. 
The second phase is message aggregation, as shown in Fig.~\ref{fig:model}(b). MGNN aggregates node features (i.e., messages) on each motif-based adjacency matrix to produce a set of node representations w.r.t.~each motif. The first two phases of motif instance counting \cite{zhao2018ranking} and message aggregation \cite{DBLP:conf/dsw/MontiOB18, DBLP:journals/corr/abs-1711-05697} are largely similar to previous works, except that we have employed all the motifs of size three to comprehensively capture high-order structures in an efficient manner. In the second phase, we follow previous work completely. The third phase is the redundancy minimization among motifs, as shown in Fig.~\ref{fig:model}(c). We propose a redundancy minimization operator, which compares the motifs with each other and distills the features unique to each motif.
The fourth phase performs the updating of node representations  by  combining multiple representations from different motifs, as shown in Fig.~\ref{fig:model}(d).
In particular, to enhance the discriminative power,  MGNN utilizes an injective function to combine the representations w.r.t.~different motifs.


\subsection{Motif-based Adjacency Matrix Construction}\label{sec:adj}
\begin{figure}[t]
	\centering
	\includegraphics[width=\columnwidth]{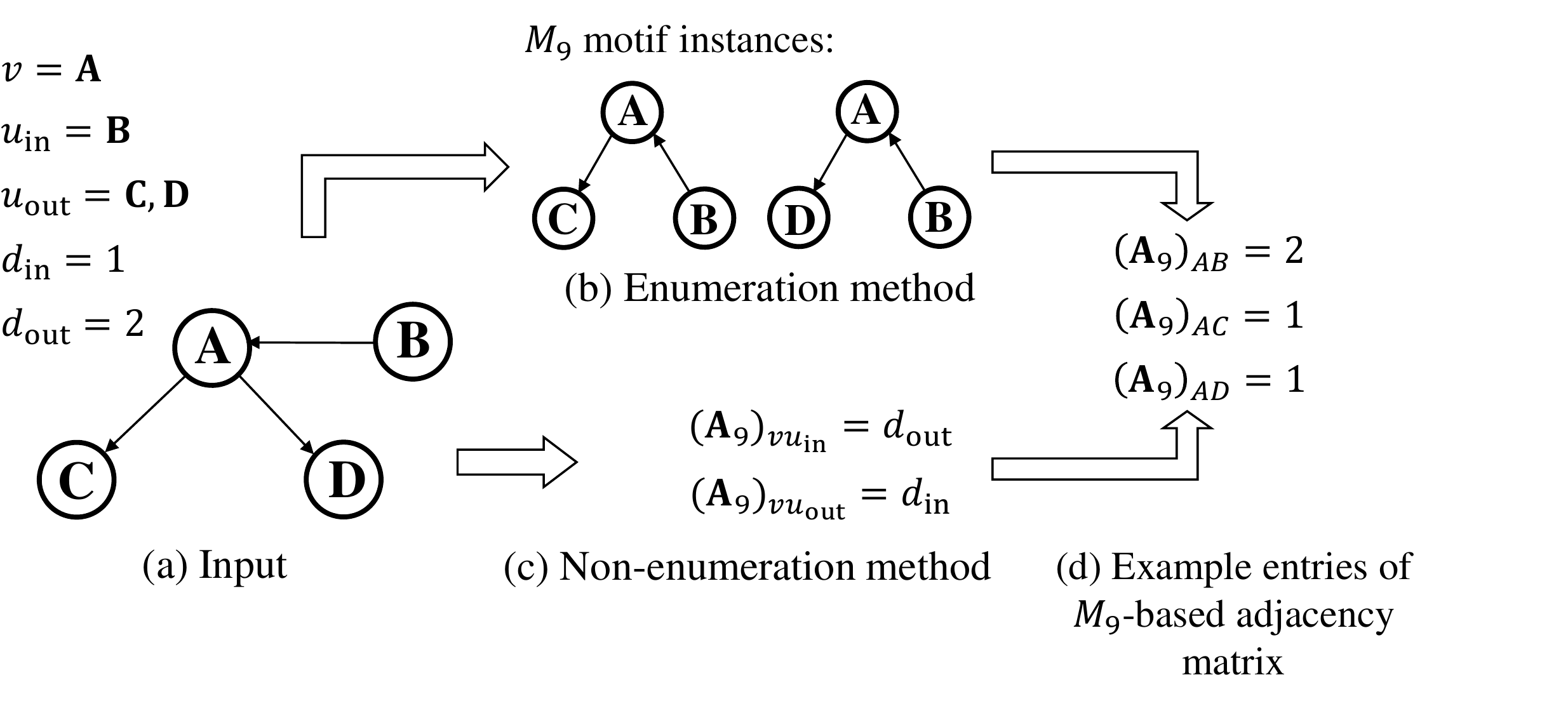} 
	\caption{Overview for constructing $M_9$-based adjacency matrix by enumeration and non-enumeration method.}
	\label{fig:motif_adj_construct}
\end{figure}

The key step to  constructing a motif-based adjacency matrix is to efficiently count the number of motif instances.
Depending on if the motif is open ($M_8$--$M_{13}$) or closed ($M_1$--$M_7$), different counting algorithms will apply.

For open motifs ($M_8$--$M_{13}$), existing methods \cite{zhang2019local,ribeiro2021survey} 
are often implemented by enumerating motif instances. For example, given the graph in Fig.~\ref{fig:motif_adj_construct}(a), to construct the $M_9$-based adjacency matrix, a traditional technique is to enumerate the instances of $M_9$ as shown in Fig.~\ref{fig:motif_adj_construct}(b). However, such enumeration suffers from high computational complexity, with a worst-case complexity of $O(|\mathcal{V}|^3)$ in both space and time.
To reduce the complexity, we propose an adjacency matrix construction method for open motifs without enumerating any motif instance, which has a time and space complexity of $O(|\mathcal{V}|^2)$ and $O(|\mathcal{V}|)$, respectively. Consider a node $v$. Let $u_\text{in}$, $u_\text{out}$, and $u_\text{bi}$ denote an incoming, outgoing, and bi-directional neighbor of node $v$, respectively. Correspondingly, let $d_\text{in}$, $d_\text{out}$ and $d_\text{bi}$  denote the number of each type of neighbor of $v$, respectively, as illustrated by the examples in Fig.~\ref{fig:motif_adj_construct}(a). As shown in Fig.~\ref{fig:motif}, the center node of each open motif has at most two types of neighbors; for example, $M_9$ has $u_\text{out}$ and $u_\text{in}$, and $M_{13}$ has only $u_\text{bi}$. Our key observation is that $(\mathbf{A}_k)_{vu}$, the number of times two nodes ($v$ and $u$) are connected via an instance of an open motif $M_k$, can be computed as follows. On one hand, when the motif has two types of neighbors, 
$(\mathbf{A}_k)_{vu}$ will be equal to the number of the other type of neighbors, e.g., $(\mathbf{A}_9)_{vu_\text{in}} = d_\text{out}$, $(\mathbf{A}_9)_{vu_\text{out}} = d_\text{in}$, $(\mathbf{A}_{11})_{vu_\text{out}} = d_\text{bi}$,  $(\mathbf{A}_{12})_{vu_\text{in}}=d_\text{bi}$,
$(\mathbf{A}_{11})_{vu_\text{bi}} =  d_\text{out} - 1$, $(\mathbf{A}_{12})_{vu_\text{bi}} =  d_\text{in} - 1$. On the other hand, when the motif has only one type of neighbors, $(\mathbf{A}_k)_{vu}$ will be equal to the number of neighbors in the motif, e.g., $(\mathbf{A}_8)_{vu_\text{out}} = d_\text{out} - 1$, $(\mathbf{A}_{10})_{v u_\text{in}} = d_\text{in} - 1$, $(\mathbf{A}_{13})_{vu_\text{bi}} = d_\text{bi} - 1$. 
Still using Fig.~\ref{fig:motif_adj_construct} as an example, node B is an incoming neighbor ($u_{\text{in}}$) of node A, while C and D are the outgoing neighbors ($u_{\text{out}}$) of node A. Furthermore, for $(\mathbf{A}_9)_{AB}$, it satisfies $(\mathbf{A}_9)_{vu_\text{in}}=d_\text{out}$ (denoting node A as $v$) and for $(\mathbf{A}_9)_{AC}$ or $(\mathbf{A}_9)_{AD}$, it satisfies $(\mathbf{A}_9)_{vu_\text{out}} = d_\text{in}$. 


The motif-based adjacency matrix for a closed motif ($M_1$--$M_7$) can be constructed by an existing method \cite{zhao2018ranking} with a time and space complexity of $O(|\mathcal{V}|^3)$ and $O(|\mathcal{V}|^2)$, respectively, which counts $(\mathbf{A}_k)_{vu}$ through two matrix multiplication operations and the matrices used by this method can be stored in
the HDF5 format \cite{hdf1997hierarchical}. 

\subsection{Motif-wise Message Aggregation}
To produce the motif-wise node representations, on each motif-based adjacency matrix, node features (i.e, messages) can be incorporated into a multi-layer message aggregation mechanism, as shown in Fig.~\ref{fig:model}(b). 

Specifically, the motif $M_k$-based representation of node $v$ in the $l$-th layer is given by
\begin{equation}\label{equ:maf}
\mathbf{h}_{v, k}^{(l)} = \operatorname{AGG}\left( \left\{ \alpha^{(l)}_{k,vi}\cdot (\tilde{\mathbf{A}}_k)_{vi} (\mathbf{Z}^{(l)})_{i*}| i \in \mathcal{N}(v) \right\} \right),
\end{equation}
\begin{equation}\label{equ:gcn}
\mathbf{Z}^{(l)} = \tilde{\mathbf{A}}\mathbf{H}^{(l-1)}\mathbf{W}^{(l)},
\end{equation}
where $\mathbf{H}^{(l-1)} \in \mathbb{R}^{|\mathcal{V}| \times d_{l-1}}$ denote the node messages from the previous $(l-1)$-th layer and $\mathbf{H}^{(0)} = \mathbf{X}$, $\mathbf{W}^{(l)} \in \mathbb{R}^{d_{l-1} \times d_l}$ is the trainable weight matrix in the $l$-th layer. $\tilde{\mathbf{A}}$ is the normalized adjacency matrix given by $\tilde{\mathbf{A}} = \hat{\mathbf{A}} - \frac{\hat{\lambda}_{\max}}{2}\mathbf{I}$, where $\hat{\mathbf{A}} = \mathbf{D}^{-0.5}\mathbf{A}\mathbf{D}^{-0.5}$ and  $\mathbf{D}$ is a diagonal
matrix in which the diagonal elements are defined as $(\mathbf{D})_{ii} = \sum_{j=1}^{|\mathcal{V}|} (\mathbf{A})_{ij}$ and $\hat{\lambda}_{\max}$ refers to the largest eigenvalue of $\hat{\mathbf{A}}$. The above normalization technique aids in the centralization of the Laplacian's  eigenvalues and the reduction of the spectral radius bound~\cite{DBLP:conf/nips/WijesingheW19}. The motif $M_k$-based adjacency matrix $\mathbf{A}_k$ is normalized in the same way into $\tilde{\mathbf{A}}_k$. AGG, the function of $\mathbf{H}^{(l-1)}$ and $\tilde{\mathbf{A}}_k$ as Fig.~\ref{fig:model}(b) illustrated, is a message aggregate function, e.g., sum, mean or max. The coefficient $\alpha^{(l)}_{k,vi}$ is the attention weight that indicates the importance of node $i$’s messages to node $v$. $\alpha^{(l)}_{k,vi}$ can be assigned a constant value according to prior knowledge or computed by the attention mechanism \cite{DBLP:journals/corr/abs-1710-10903}. $\mathcal{N}(v)$ represents the set of neighboring nodes of $v$. 
Note that not all nodes will have 13 motifs, and MGNN can still accommodate such nodes. In particular, if node $v$ lacks a motif $M_k$, the entries $(A_k)_{vj}$ and $(A_k)_{jv}$ in Eq.~\eqref{equ:motif_adj} are all set to zeros, and subsequently, $\mathbf{h}^{(l)}_{v,k}$ in Eq.~\eqref{equ:maf} will also be a zero vector. 


Intuitively, in Eq.~\eqref{equ:maf}, before performing motif-wise aggregation for the motif $M_k$, we first stack a GCN layer \cite{DBLP:conf/iclr/KipfW17}, i.e., $\mathbf{Z} = \tilde{\mathbf{A}}\mathbf{H}^{(l-1)}\mathbf{W}^{(l)}$ in Eq.~\eqref{equ:gcn}, to update the overall node messages by aggregating from the previous layer. The GCN layer can also be replaced by other GNN layers.



\subsection{Motif Redundancy Minimization}
As different motifs often share certain substructures, their corresponding motif-wise representations may become similar and lack specificity.   
Inspired by the idea of redundancy minimization between features \cite{DBLP:journals/pami/PengLD05}, we propose a redundancy minimization operator at the motif level, denoted $\Delta$. The key idea of $\Delta$ is to compare the motifs with each other and adaptively distill the features specific to each motif. We formally define $\Delta$ as follows.
Given a node $v$, for simplicity, let $\mathbf{h}_k$ and $\mathbf{z}_v$ denote $\mathbf{h}^{(l)}_{v, k}$, $(\mathbf{Z}^{(l)})_{v*}$ respectively.
We call the motif- and GCN-based representations collectively as the intermediate representations of the node.
\begin{definition}\label{def:op} (Motif redundancy minimization operator). 
For any node $v$, given its intermediate representations $\mathbf{h}_1$, ..., $\mathbf{h}_{13}, \mathbf{z}_v$, let $\bar{\mathcal{H}}_k=\Big(\big\|_{i=1,i\ne k}^{13} \mathbf{h}_i\Big) \big\| \mathbf{z}_v$, where $\|$ is the concatenation operator. In other words, $\bar{\mathcal{H}}_k$ concatenates all the intermediate representations except that based on motif $M_k$. Then, for motif $M_k$, its redundancy minimized representation of the node $v$ is given by
\begin{equation}\label{equ:op}
\begin{aligned}
\tilde{\mathbf{h}}_k&=\Delta(k, \mathbf{h}_1, ..., \mathbf{h}_{13}, \mathbf{z}_v)\\
&= \sigma \Big(\beta_k \cdot \big( f(\mathbf{h}_k) - f_k(\bar{\mathcal{H}}_k) \big) \Big).
\end{aligned}
\end{equation}
$\tilde{\mathbf{h}}_k$ is the updated representation of $\mathbf{h}_k$ after redundancy minimization.
$f: \mathbb{R}^d \to \mathbb{R}^{d'}$ is a learnable projection function to map the intermediate motif-based representations to the same space as its redundant features.
And $f_k: \mathbb{R}^{13d} \to \mathbb{R}^{d'}$ is a learnable feature selection function, which selects the redundant features w.r.t.~motif $M_k$. 
$\beta_k$ is the similarity between $f(\mathbf{h}_k)$ and $f_k(\bar{\mathcal{H}}_k)$, which acts as a regularizer
to prevent extremely small or large differences between the two terms.  $\sigma$ is an activation function (e.g., ReLU). 
\end{definition}

Intuitively, in Eq.~\eqref{equ:op}, the motif redundancy minimization operator subtracts or removes redundant features w.r.t.~each motif from the intermediate representations of a given node. Apart from minimizing the redundancy, the operator also performs an adaptive selection of motifs in general. That is, for an unimportant motif $M_k$, 
this operator will make $\tilde{\mathbf{h}}_{k}$ in Eq.~\eqref{equ:op} close to a zero vector through functions $f$ and $f_k$. In particular, when $\tilde{\mathbf{h}}_{k}$ is a zero vector, it is equivalent to removing the instance of $M_k$ containing node $v$ in Eq.~\eqref{equ:motif_adj}. In Section~\ref{sec:case}, we will use a heatmap to demonstrate this adaptive selection mechanism, which improves the robustness of MGNN.

To realize the motif redundancy minimization operator, we need to instantiate $f$, $f_k$ and $\beta_k$ in Eq.~\eqref{equ:op}.
In particular, we use a fully connected layer to fit $f$ and $f_k$, namely,
\begin{equation}\label{equ:f1}
f(\mathbf{h}_k) = \mathbf{W}_f^{(l)} \mathbf{h}_{k} + \mathbf{b}_f^{(l)},
\end{equation}
\begin{equation}\label{equ:f2}
f_k(\bar{\mathcal{H}}_k) = \mathbf{W}^{(l)}_{f_k} \bar{\mathcal{H}}_k + \mathbf{b}_{f_k}^{(l)},
\end{equation}
where $\mathbf{W}_f^{(l)} \in \mathbb{R}^{d'_{l} \times d_{l}}$ is a trainable matrix in the $l$-th layer shared by all motifs, and $\mathbf{W}^{(l)}_{f_k} \in \mathbb{R}^{d'_{l} \times  (13 d_{l})}$ is a trainable matrix specific to motif $M_k$ in the $l$-th layer, and $\mathbf{b}_f^{(l)} \in \mathbb{R}^{d'_l}$ and $\mathbf{b}_{f_k}^{(l)} \in \mathbb{R}^{d'_l}$ are the corresponding bias vectors. 
Furthermore, to measure the similarity $\beta_k$, 
we use the inner product with a  non-linear  activation (e.g., sigmoid or tanh),
that is,
\begin{equation}\label{equ:att}
\beta_k= \sigma \big(f(\mathbf{h}_k)^\intercal f_k(\bar{\mathcal{H}}_k)\big).
\end{equation}
$\sigma$ in Eq.\eqref{equ:op} and Eq.\eqref{equ:att} might be the same or distinct.


Through the above instantiations, we can minimize the redundancy among motifs as shown in Fig.~\ref{fig:model}(c), for every node in every layer. That is,
\begin{equation}\label{equ:instance}
\tilde{\mathbf{h}}_{v, k}^{(l)} =\Delta\big(k, \mathbf{h}_{v, 1}^{(l)}, ..., \mathbf{h}_{v, 13}^{(l)}, (\mathbf{Z}^{(l)})_{v*}\big),
\end{equation}
where $\tilde{\mathbf{h}}_{v, k}^{(l)} \in \mathbb{R}^{d'_l}$ is the updated representation of node $v$ based on motif $M_k$ in the $l$-th layer.







\subsection{Node Representation Update via Injective Function}\label{sec:update}
As shown in Fig.~\ref{fig:model}(d), MGNN updates the node representation by combining their intermediate, motif-based representations. To  improve  the  discriminative power on high-order structures, MGNN utilizes an injective function to combine different motif-based representations of each node, to update the output node representations in each layer. Specifically, we use the injective vector concatenation function, and generate the output node representation in the $l$-th layer below.
\begin{equation}\label{equ:concat}
\mathbf{h}^{(l)}_v = \big\|_{k=1}^{13} \tilde{\mathbf{h}}_{v, k}^{(l)}, 
\end{equation}
where $\mathbf{h}^{(l)}_v \in \mathbb{R}^{13 d_l}$ is the output representation of node $v$ in the $l$-th layer.

The following two properties of the concatenation function are essential to increase the expressive power of MGNN. First, the output node representation $\mathbf{h}_v^{(l)}$ will not change if the order of concatenation and aggregation is interchanged. Second, $\mathbf{h}_v^{(l)}$ can always explicitly preserve each motif-based feature embedding via the injective combination. Using these two properties, we can theoretically show that MGNN has a larger representation capacity than the standard GNN, as we will further discuss in Section~\ref{sec:theorem}. 


\subsection{Model Training}\label{sec:model_train}
The node representations generated by MGNN can be used for various downstream learning tasks, including supervised and unsupervised learning.

For supervised learning, the node representations can be directly used as features for a specific downstream task, optimized with a supervised loss that can be abstracted as
\begin{equation}\label{equ:suploss}
\mathcal{L}(\mathbf{Y}, \hat{\mathbf{Y}}),
\end{equation}
\begin{equation}\label{equ:predict_func}
\hat{\mathbf{Y}} = \Phi(\mathbf{H}^{(L)}),
\end{equation}
where $\hat{\mathbf{Y}}$ is the predicted matrix. $\mathbf{H}^{(L)}$ is the node representation matrix generated by the last or $L$-th MGNN layer, such that its $i$-th row is the embedding vector $\mathbf{h}^{(L)}_i$ of node $i$ in Eq.~\eqref{equ:concat}. The loss function $\mathcal{L}$, prediction function $\Phi$ and the ground truth $\mathbf{Y}$ depend on the specific downstream task. Taking node classification as an example, the loss can be the cross-entropy loss over the training samples, as follows.
\begin{equation}\label{equ:nodeclsloss}
\sum_{i \in \mathcal{Y}} \sum^{n_c}_{j=1} - (\mathbf{Y})_{ij} \log (\mathbf{\tilde{H}}^{(l)})_{ij},
\end{equation}
where $\mathcal{Y}$ is the set of training node indices,  $n_c$ denotes the number of classes, 
$\mathbf{Y}$ is the ground truth matrix such that its $i$-th row is the one-hot label vector of node $i$, and $\mathbf{\tilde{H}}^{(l)}$ is the predicted matrix such that its $i$-th row is the predicted class distribution of node $i$, which can be obtained by taking a softmax activation or additional neural network layers as the prediction function $\Phi$ and passing $\mathbf{H}^{(l)}$ through $\Phi$.
Another common supervised task is graph classification, which can use a similar cross-entropy loss and prediction function, but the node representations must first undergo a readout operation \cite{DBLP:conf/iclr/XuHLJ19} to generate the graph-level representations.  

For unsupervised learning, the node representations can be trained through the graph auto-encoder  \cite{DBLP:journals/corr/KipfW16a} or other self-supervised frameworks \cite{you2020graph} without any task-specific supervision.


Algorithm \ref{alg:mgnn} summarizes the framework of MGNN. 
To be more specific, MGNN takes the node features, normalized adjacency matrix, and motif-based adjacency matrices for all possible motifs as inputs. The construction of motif-based adjacency matrices is based on our proposed method and another method in the literature mentioned in section \ref{sec:adj}. MGNN further propagates the node representations (or node input features) layer by layer. In each layer, first, from line 4 to line 7, MGNN produces the $M_k$-based representation $\mathbf{h}^{(l)}_{v,k}$ of node $v$ by performing message aggregation. Second, from line 8 to line 14, MGNN compares the $M_k$-based representation with each other using the motif redundancy minimization operator, to distill the features specific to each motif. Third, in line 15, MGNN utilizes the injective concatenation to combine $M_k$-based representations and update the representation of the node. Finally, in line 18, the set of output representations of each node is returned. 

The computational complexity of one MGNN layer is $\mathcal{O}(|\mathcal{V}|^2)$, as follows. Firstly, the complexity is dominated by the computations in Eqs.~\eqref{equ:maf} and \eqref{equ:gcn}, where the time complexities are given by $\mathcal{O}(|\mathcal{V}|d)$ and $\mathcal{O}(|\mathcal{V}|^2d)$, respectively ($d$ denotes the dimension of an MGNN layer). Hence, when computing Eq.~\eqref{equ:maf} over all the nodes, the complexity is $\mathcal{O}(|\mathcal{V}|^2d)$.  Secondly, in our implementation,  Eq.~\eqref{equ:gcn} can be pre-calculated before Eq.~\eqref{equ:maf}. Therefore, the overall time complexity of MGNN is $\mathcal{O}(|\mathcal{V}|^2d)$, which can be further simplified to $\mathcal{O}(|\mathcal{V}|^2)$ as $d$ is typically a small constant.


\begin{algorithm}[t]
 \LinesNumbered 
 \label{alg:mgnn}
 \caption{The framework of MGNN.} 
  \KwIn{Node input feature matrix $\mathbf{X} \in \mathbb{R}^{|\mathcal{V}| \times d_0}$, normalized adjacency matrix $\tilde{\mathbf{A}} \in \mathbb{R}^{|\mathcal{V}| \times |\mathcal{V}|}$, and normalized motif-based adjacency matrices  $\tilde{\mathbf{A}}_{k} \in \mathbb{R}^{|\mathcal{V}| \times |\mathcal{V}|}$, $k \in \{1, ..., 13\}$.}
  \KwOut{Node embedding $\mathbf{h}^{(L)}_v \in \mathbb{R}^{13d_L}$ for each node.}
Randomly initialize all parameters

   $\mathbf{H}^{(0)} \leftarrow \mathbf{X}$
   
   \For{$l = 1, ..., L$}{
   \For{$v \in \mathcal{V}$}{
   

   
   \For{$k = 1, ..., 13$}{
   
  $\mathbf{h}^{(l)}_{v,k} \leftarrow$ Compute the $M_k$-based representation of node $v$ by Eq.~\eqref{equ:maf}
   }
  

     \For{$k = 1, ..., 13$}{

  
  $f(\mathbf{h}^{(l)}_{v,k}) \leftarrow$ Map $\mathbf{h}^{(l)}_{v,k}$ to the same space by Eq.~\eqref{equ:f1}
  
  $\bar{\mathcal{H}}_k \leftarrow$ Concatenate each $\mathbf{h}^{(l)}_{v,i} (i \ne k)$  and $\mathbf{z}_v$ according definition \ref{def:op}
  

  $f_k(\bar{\mathcal{H}}_k) \leftarrow$ Select the redundant feature w.r.t. motif $M_k$ by Eq.~\eqref{equ:f2}

  $\beta_k \leftarrow$ Compute the similarity between $f(\mathbf{h}^{(l)}_{v,k})$ and $f(\bar{\mathcal{H}}_k)$ by Eq.~\eqref{equ:att}
  
  $\tilde{\mathbf{h}}_{v, k}^{(l)} \leftarrow$ Update $\mathbf{h}^{(l)}_{v,k}$ based on $f(\mathbf{h}^{(l)}_{v,k})$, $f_k(\bar{\mathcal{H}}_k)$ and $\alpha_k$ by Eq.~\eqref{equ:instance}

   }
   
  
    $\mathbf{h}^{(l)}_v \leftarrow$ Concatenate each $\tilde{\mathbf{h}}^{(l)}_{v,k}$ by Eq.~\eqref{equ:concat}
   
   }
   }

\Return{ $\{\mathbf{h}^{(L)}_v \big| v \in \mathcal{V}\}$}
\end{algorithm}




\section{Theoretical Analysis}\label{sec:theorem}
In this section, we aim to analyze the representation capacity of MGNN in comparison with standard GNN. In order to facilitate the analysis, we first introduce a simplified version of MGNN, and then further show that even the simplified MGNN still has stronger discriminative power than standard GNN.

\subsection{Simplified Version of MGNN}
A simplified version of the $l$-th MGNN layer is as follows:
\begin{equation}\label{equ:mgnn_abs1}
    \mathbf{h}^{(l)}_{v, k} =  \omega \Big( \Big\{ \alpha^{(l)}_{k,vi} \cdot (\mathbf{A}_k)_{vi}   \mathbf{W}_m^{(l)} \mathbf{h}^{(l-1)}_i  \big| i \in \mathcal{N}(v) \Big\} \Big),
\end{equation}
\begin{equation}\label{equ:mgnn_abs2}
    \mathbf{h}^{(l)}_v =  \big\|_{k=1}^{13} \sigma(\mathbf{h}^{(l)}_{v, k}),
\end{equation}
where $\omega$ represents the aggregate function. 

Then we demonstrate that Eqs.~\eqref{equ:mgnn_abs1}--\eqref{equ:mgnn_abs2} is a simplified version of a MGNN layer. Specifically, in Eq.~\eqref{equ:maf}, normalized $\tilde{\mathbf{A}}_k$ and $(\mathbf{Z}^{(l)})_{i*}$ are substituted for $\mathbf{A}_k$ as well as $\mathbf{W}_m^{(l)}\mathbf{h}^{(l-1)}_i$, respectively, where $\mathbf{W}_m^{(l)}$ is the trainable weight matrix in the $l$-th simplified MGNN layer and $\mathbf{h}^{(l-1)}_i$ is the node messages from the previous ($l-1$)-th simplified MGNN layer ($\mathbf{h}^{(0)}_i = \mathbf{x}_i$). After that, Eq.~\eqref{equ:mgnn_abs1} is obtained. Then, the output $\mathbf{h}^{(l)}_{v, k}$ of Eq.~\eqref{equ:mgnn_abs1} is utilized in place of $\tilde{\mathbf{h}}_{v, k}^{(l)}$ in Eq.~\eqref{equ:concat} and then Eq.~\eqref{equ:mgnn_abs2} is obtained. Thus, Eqs.~\eqref{equ:mgnn_abs1}--\eqref{equ:mgnn_abs2} is a simplified version of a MGNN layer.




\begin{table}[htbp]
	\centering
	\caption{The layer of the abstract model of the standard GNN or the simplified version of MGNN.}
	\begin{tabular}{lc}
		\toprule
		Step1 & message aggregation \\
		\midrule
		Standard GNN & $\bar{\mathbf{h}}^{(l)}_v = \omega \left(\left\{ (\mathbf{A})_{vi} \mathbf{W}_s^{(l)} \tilde{\mathbf{h}}^{(l-1)}_i \big| i \in \mathcal{N}(v) \right\}\right)$\\
		MGNN &      $\mathbf{h}^{(l)}_{v, k} =  \omega \Big( \Big\{\alpha^{(l)}_{k,vi} \cdot (\mathbf{A}_k)_{vi}   \mathbf{W}_m^{(l)}\mathbf{h}^{(l-1)}_i  \big| i \in \mathcal{N}(v) \Big\} \Big)$\\
		\midrule
		Step2 & node representation update \\
		\midrule
		Standard GNN & $\tilde{\mathbf{h}}^{(l)}_v = \sigma(\bar{\mathbf{h}}^{(l)}_v) $ \\
		MGNN &  $\mathbf{h}^{(l)}_v =  \big\|_{k=1}^{13} \sigma(\mathbf{h}^{(l)}_{v, k})$\\
		\bottomrule
	\end{tabular}%
	\label{tab:architecture_comparison}%
\end{table}%

\subsection{Representational Capacity Study}
In order to  compare the representational capacity of the simplified version of MGNN with that of the standard GNN, we begin with the layers of the abstract model of the standard GNN and the simplified version of MGNN in Table \ref{tab:architecture_comparison}, where $\mathbf{W}_s^{(l)}$ is the trainable weight matrix in the $l$-th standard GNN layer, and $\tilde{\mathbf{h}}^{(l-1)}_i$ is the node messages from the previous ($l-1$)-th standard GNN layer ($\tilde{\mathbf{h}}^{(0)}_i = \mathbf{x}_i$). The mainstream models of GNNs, including GCN, GAT, GraphSAGE and GIN, can be viewed as an instance of the standard GNN. Then we shows MGNN has larger representational capacity than standard GNN in Lemmas~\ref{lma:special_case}--\ref{lma: example} and Theorem \ref{thm:powerful}. 

Based on the above abstractions, we first show that even a special case of MGNN at least has the same representational capacity as the standard GNN in Lemma \ref{lma:special_case}.

\begin{lemma}\label{lma:special_case}
	Given any an instance of the standard GNN, if the aggregate functions of standard GNN and MGNN are the same and the input to $\omega$ only consist of values in the same dimension from different feature vectors, its representations of the graphs can also be generated by a special case of MGNN. 
\end{lemma}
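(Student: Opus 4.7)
The plan is to exhibit an explicit parameter choice for the simplified MGNN under which one of the thirteen motif channels reproduces a standard GNN layer, and then to argue by induction on the layer depth that the standard GNN's representation survives inside the MGNN's concatenated output, from which any downstream graph representation can be recovered by a fixed linear projection.

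Concretely, I would pick an index $k^\star \in \{1,\ldots,13\}$ and set $\mathbf{W}_m^{(l)} = \mathbf{W}_s^{(l)} \mathbf{P}_{k^\star}$, where $\mathbf{P}_{k^\star}$ is the block-selection matrix that slices out the $k^\star$-th block of a $13 d_{l-1}$-dimensional input (for $l=1$ the projection is the identity since $\mathbf{h}^{(0)}_i = \mathbf{x}_i$). For the attention coefficients I would take $\alpha^{(l)}_{k^\star, vi} = (\mathbf{A})_{vi}/(\mathbf{A}_{k^\star})_{vi}$ whenever the denominator is non-zero, so that the composite edge weight $\alpha^{(l)}_{k^\star, vi} (\mathbf{A}_{k^\star})_{vi}$ appearing in Eq.~\eqref{equ:mgnn_abs1} reduces to $(\mathbf{A})_{vi}$. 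The remaining motif channels $k \ne k^\star$ may be set arbitrarily since their contributions occupy disjoint blocks of the concatenation and can be ignored at read-out time.

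I would then perform an induction on the layer index $l$ to establish $\sigma(\mathbf{h}^{(l)}_{v, k^\star}) = \tilde{\mathbf{h}}^{(l)}_v$ for every node $v$. The base case is immediate from $\mathbf{h}^{(0)}_v = \mathbf{x}_v = \tilde{\mathbf{h}}^{(0)}_v$. In the inductive step, the hypothesis gives $\mathbf{W}_m^{(l)} \mathbf{h}^{(l-1)}_i = \mathbf{W}_s^{(l)} \tilde{\mathbf{h}}^{(l-1)}_i$; substituting into Eq.~\eqref{equ:mgnn_abs1} and invoking the coordinate-wise property of $\omega$ assumed in the lemma yields exactly the standard GNN aggregation in Table~\ref{tab:architecture_comparison}, so $\mathbf{h}^{(l)}_{v, k^\star} = \bar{\mathbf{h}}^{(l)}_v$ and therefore $\sigma(\mathbf{h}^{(l)}_{v, k^\star}) = \tilde{\mathbf{h}}^{(l)}_v$. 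Because this quantity occupies the $k^\star$-th block of the concatenation in Eq.~\eqref{equ:mgnn_abs2}, the standard GNN's final node embedding, and hence any graph-level representation built from it by readout, can be recovered from the MGNN output by a single linear projection.

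The main obstacle I expect is the well-definedness of $\alpha^{(l)}_{k^\star, vi}$: the construction breaks precisely when $(\mathbf{A}_{k^\star})_{vi} = 0$ although $(v,i)$ is a true edge, i.e., when $(v,i)$ participates in no instance of $M_{k^\star}$. I would address this either by arguing that a single motif covers every neighborhood edge under the mild assumption that each edge lies in at least one three-node subgraph, or, more robustly, by distributing the aggregation across several motif channels and absorbing the resulting linear combination into the next layer's weight matrix $\mathbf{W}_m^{(l+1)}$. Verifying that the chosen aggregator $\omega$ is compatible with this redistribution — linearity is automatic for sum, while mean or max require additional per-coordinate bookkeeping — is what I would expect to occupy the bulk of the formal write-up.
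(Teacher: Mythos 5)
Your proposal is correct and follows essentially the same route as the paper's proof: designate one motif channel, set $\alpha^{(l)}_{k,vi}$ so that $\alpha^{(l)}_{k,vi}(\mathbf{A}_k)_{vi}=(\mathbf{A})_{vi}$, exploit the coordinate-wise aggregation assumption to interchange concatenation with $\omega$, and induct over layers. The one obstacle you flag is resolved in the paper exactly as you anticipate — the self-loop assumption guarantees some open motif (taken to be $M_{13}$) with $(\mathbf{A}_{13})_{vi}>0$ on every edge — and the paper simply sets $\alpha^{(l)}_{k,vi}=0$ on the remaining twelve channels so that the output is the standard GNN embedding padded with zero blocks, rather than leaving them arbitrary and projecting at read-out.
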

The proof for Lemma \ref{lma:special_case} is hinged on two important properties of the injective concatenation function, i.e., the interchangeability of concatenation and aggregation, and the explicit preservation of motif-based representations, as first mentioned in Section~\ref{sec:update}. The detailed proof can be found in Section I of our supplementary materials. In short, Lemma \ref{lma:special_case} shows that a standard GNN can be subsumed by MGNN. Taking one step further, we show that there exist two graphs that can be distinguished by MGNN but are indistinguishable by the standard GNN.

\begin{lemma}\label{lma: example}
There exist two non-isomorphic graphs $G$ and $G'$ with self-loops, which can be distinguished by MGNN, but not by the standard GNN.
\end{lemma}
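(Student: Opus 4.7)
The plan is to exhibit a concrete pair $G$, $G'$ of non-isomorphic directed graphs with self-loops that are $1$-WL (hence standard-GNN) equivalent but whose $3$-node motif counts differ. A natural candidate is $G = 2K_3$ (two disjoint triangles) and $G' = C_6$ (a six-cycle), each viewed as a directed graph in which every undirected edge is replaced by a pair of anti-parallel arcs, with a self-loop attached to every one of its six vertices and with a common constant feature $\mathbf{x}$ assigned to every node. Both graphs are $2$-regular (ignoring self-loops), vertex-transitive, and share the same degree sequence, yet they are clearly non-isomorphic since $2K_3$ is disconnected and $C_6$ is connected.

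First I would show that the abstract standard GNN in Table~\ref{tab:architecture_comparison} collapses on this pair. Because every vertex has the same initial feature and the same multiset of neighbors (itself via the self-loop plus two bi-directional neighbors carrying the common feature), an easy induction on the layer index $l$, applied to the update rule $\tilde{\mathbf{h}}^{(l)}_v = \sigma\!\left(\omega\!\left(\{(\mathbf{A})_{vi}\,\mathbf{W}_s^{(l)}\tilde{\mathbf{h}}^{(l-1)}_i \mid i\in\mathcal{N}(v)\}\right)\right)$, yields one and the same value $c^{(l)}$ at every vertex of $G$ and every vertex of $G'$. Consequently any permutation-invariant readout assigns identical graph-level representations to $G$ and $G'$, so a standard GNN cannot distinguish them. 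This is essentially the well-known fact that $C_6$ and $2K_3$ are indistinguishable by the $1$-WL test.

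Next I would show that the simplified MGNN of Eqs.~\eqref{equ:mgnn_abs1}--\eqref{equ:mgnn_abs2} separates them via the fully bi-directional closed $3$-node motif, one of $M_1,\dots,M_7$ in Fig.~\ref{fig:motif}; call it $M_k$. Directly from Definition~\ref{def:motif_adj}, in $G = 2K_3$ every ordered pair $(v,u)$ with $v\ne u$ lying inside the same triangle is contained in exactly one instance of $M_k$, so $(\mathbf{A}_k)_{vu}=1$ on the triangle edges, whereas in $G'=C_6$ the matrix $\mathbf{A}_k$ is identically zero because $C_6$ is triangle-free. Substituting into Eq.~\eqref{equ:mgnn_abs1}, the $M_k$-based message aggregation over $G'$ returns the zero vector at every vertex and every layer, while at each vertex of $G$ it returns a non-trivial linear combination of (constant) neighbor features. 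By injectivity of the concatenation in Eq.~\eqref{equ:mgnn_abs2}, the $M_k$-slot of the output already disagrees between $G$ and $G'$, and this difference is preserved under any (even trivial) subsequent readout.

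The main obstacle I foresee is ruling out accidental cancellation after the aggregator $\omega$ and activation $\sigma$, so that the $M_k$-slot really differs on the two graphs. This is handled by taking $\omega$ to be a standard choice that maps the all-zero multiset to $\mathbf{0}$ and choosing $\mathbf{x}$ together with $\mathbf{W}_m^{(l)}$ so that the non-zero aggregate survives $\sigma$; both conditions hold for generic choices and in particular for the instantiations used by GCN, GAT, GraphSAGE, and GIN. A secondary care point is to verify that the self-loops do not inadvertently create spurious $3$-node motif instances, but this is automatic since Definition~\ref{def:motif_instance} applies only to induced subgraphs on three distinct vertices.
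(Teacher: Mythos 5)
Your proposal is correct and follows essentially the same route as the paper: it exhibits a concrete $1$-WL-equivalent pair (the paper's Fig.~6 example plays the role of your $2K_3$ versus $C_6$), argues that identical features and identical neighbor multisets force the standard GNN to collapse both graphs to the same representation, and then separates them via the closed (triangle) motif, which is present in one graph and absent in the other, so that the corresponding slot of MGNN's injective concatenation differs. Your write-up is merely more explicit than the paper's two-sentence argument about ruling out cancellation under $\omega$ and $\sigma$, which is a reasonable extra care point but not a departure in method.
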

\begin{proof}

\begin{figure}[t!]
	\centering
	\includegraphics[width=0.65\columnwidth]{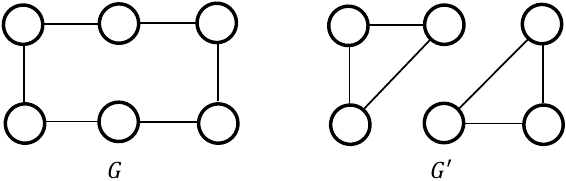} 
	\caption{Two graphs with self-loops that cannot be distinguished by the standard GNN. Inside these two graphs, the features of the nodes are the same and the self-loops are not depicted for brevity.}
	\label{fig:lemma2}
\end{figure}

As Fig.~\ref{fig:lemma2} illustrates, consider the two non-isomorphic graphs $G$ and $G'$ with self-loops, in which all nodes have the same features.  First, $G$ and $G'$ cannot be distinguished by standard GNN, because the multi-set of neighboring features of each node are the same. Second, $G$ and $G'$ can be naturally distinguished by MGNN since each node in $G$ is only associated with an open motif, whereas each node in $G'$ is associated with both open and close motifs. 
\end{proof}

\begin{table*}[htbp]
  \centering
  \caption{Statistics of the datasets.}
    \begin{tabular}{r|lrlccc}
    \toprule
    \multicolumn{1}{l}{Category} & Dataset & \multicolumn{1}{c}{\# Graphs} & \multicolumn{1}{c}{\# Nodes (Avg.)} & \# Edges (Avg.) & \# Features & \# Classes \\
    \midrule
    \midrule
    \multicolumn{1}{l|}{\multirow{3}[2]{*}{Citation Graphs}} & Cora  & \multicolumn{1}{c}{1} & \multicolumn{1}{c}{2,708} & 5,429 & 1,433 & 7 \\
          & Citeseer & \multicolumn{1}{c}{1} & \multicolumn{1}{c}{3,327} & 4,732 & 3,703 & 6 \\
          & Pubmed & \multicolumn{1}{c}{1} & \multicolumn{1}{c}{19,717} & 44,338 & 500   & 3 \\
    \midrule
    \multicolumn{1}{l|}{Knowledge Graphs} & Chem2Bio2RF & \multicolumn{1}{c}{1} & \multicolumn{1}{c}{295,911} & 727,997 & 5     & 10 \\
    \midrule
    \multicolumn{1}{l|}{\multirow{3}[2]{*}{Biochemical  Graphs}} & MUTAG & \multicolumn{1}{c}{188} & \multicolumn{1}{c}{17.90 } & 19.79 & 7     & 2 \\
          & ENZYMES & \multicolumn{1}{c}{600} & \multicolumn{1}{c}{32.63} & 62.14 & 21    & 6 \\
          & AIDS  & \multicolumn{1}{c}{2,000} & \multicolumn{1}{c}{15.69} & 16.20  & 42    & 2 \\
    \midrule
    \bottomrule
    \end{tabular}%
  \label{tab:dataset}%
\end{table*}%

Based on the results on Lemma~\ref{lma:special_case} and Lemma \ref{lma: example}, we immediately come to the conclusion about the representation capacity in Theorem~\ref{thm:powerful}.
\begin{theorem}\label{thm:powerful}
	MGNN has a larger representation capacity than the standard GNN.
\end{theorem}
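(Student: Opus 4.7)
The plan is to assemble the theorem directly from the two preceding lemmas, which together supply the two halves of a strict-containment argument between the family of representations realizable by the standard GNN and that realizable by MGNN. I will first fix what \emph{larger representation capacity} should mean here: namely, that the set of functions (on nodes, or on whole graphs after a readout) that standard GNN can implement is a proper subset of the set MGNN can implement, or equivalently that MGNN can separate every pair of graphs that standard GNN can, plus at least one more.

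First I would invoke Lemma~\ref{lma:special_case} to obtain the non-strict inclusion. That lemma already gives a constructive specialization of MGNN---restricting the motif-wise message aggregation and collapsing the concatenation across motifs---whose output coincides with the given standard GNN instance. Since the hypothesis ``the aggregate functions agree and the input to $\omega$ consists of same-dimensional coordinates from different feature vectors'' covers the mainstream GCN/GAT/GraphSAGE/GIN instantiations already discussed, every representation producible by a standard GNN is producible by some MGNN.

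Next I would invoke Lemma~\ref{lma: example} to upgrade this inclusion to a strict one. That lemma exhibits two non-isomorphic graphs $G, G'$ with self-loops such that any standard GNN assigns them identical representations, yet MGNN distinguishes them via the presence versus absence of 3-node closed motifs. Hence there is at least one function realizable by MGNN that is \emph{not} realizable by any standard GNN, so the inclusion of the first step is proper. Combining the two directions yields the theorem, and I would state the conclusion in a single sentence after the two ``by Lemma X'' invocations.

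The main obstacle is not in this theorem itself but in honestly discharging the two lemmas it rests on: in particular one must be careful that the ``special case'' construction for Lemma~\ref{lma:special_case} genuinely uses the interchangeability of concatenation and aggregation and the explicit preservation of motif-based embeddings highlighted in Section~\ref{sec:update}, and that the example for Lemma~\ref{lma: example} is truly inseparable by every standard GNN (which follows from the 1-WL upper bound cited earlier and the fact that every node in $G$ and $G'$ has the same multiset of neighbor features). Given those two facts, the theorem itself follows with no additional calculation.
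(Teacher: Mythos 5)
Your proposal matches the paper's argument exactly: the paper derives Theorem~\ref{thm:powerful} immediately from Lemma~\ref{lma:special_case} (standard GNN is a special case of MGNN, giving the non-strict inclusion) together with Lemma~\ref{lma: example} (a pair of graphs separated by MGNN but not by any standard GNN, making the inclusion strict). Your reading of ``larger representation capacity'' as proper containment of the realizable function classes is also the intended one, so nothing further is needed.
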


\section{Experiments}\label{sec:experiments}
In this section, we introduce the details of the experimental setup and the comparison results.

\subsection{Experimental Setup}
\subsubsection{Datasets} To evaluate the effectiveness of our proposed MGNN, we utilize seven public datasets on two benchmark tasks: (1) classifying nodes on three citation network datasets (Cora, Citeseer, and Pubmed) and a knowledge graph (Chem2Bio2RDF), and (2) classifying graphs on three biochemical graph datasets (MUTAG, ENZYMES, and AIDS). Table \ref{tab:dataset} summarizes the statistics of seven datasets.
\begin{itemize}
    \item \textbf{Cora, Citeseer} and \textbf{Pubmed}~\cite{DBLP:journals/aim/SenNBGGE08} contain documents represented by nodes and citation links represented by edges.
    \item \textbf{Chem2Bio2RDF}~\cite{chen2010chem2bio2rdf} integrates data from multiple public sources. Because the node feature is not provided in Chem2Bio2RDF and the discriminative power of GNN-based methods often depends on the properties of nodes, we use the degree statistical information of each node and its $1$-hop neighborhood (5 dimensions in total) \cite{DBLP:journals/corr/abs-1811-03508} as its node features. 
    \item \textbf{MUTAG}~\cite{DBLP:conf/icml/KriegeM12} contains 188 chemical compounds divided into two classes according to their mutagenic effect on a bacterium.
    \item \textbf{ENZYMES}~\cite{DBLP:conf/ismb/BorgwardtOSVSK05} contains 100 proteins from each of the 6 Enzyme Commission top level enzyme classes.
    \item \textbf{AIDS}\footnote{https://wiki.nci.nih.gov/display/NCIDTPdata/AIDS+Antiviral+Screen+Data/}~\cite{DBLP:conf/sspr/RiesenB08} contains 2 classes (active, inactive), which represent molecules with activity against HIV or not.  
\end{itemize}


\subsubsection{Baselines} We consider three categories of methods, namely low- and high-order GNN-based methods, network embedding-based methods, as well as graph pooling-based methods. Low-order GNN-based methods include:
\begin{itemize}
	\item \textbf{GCN} \cite{DBLP:conf/iclr/KipfW17} aggregates the feature information from a node’s neighborhood.
	\item \textbf{GraphSAGE}~\cite{hamilton2017inductive} generates embeddings by sampling and aggregating features from a node’s local neighborhood.
	\item \textbf{GAT}~\cite{DBLP:journals/corr/abs-1710-10903}  incorporates the attention mechanism into the propagation step, following a self-attention strategy.
	\item \textbf{GIN}~\cite{DBLP:conf/iclr/XuHLJ19} performs the feature aggregation in an injective manner based on the theory of the $1$-WL graph isomorphism test.
    \item \textbf{BGNN} \cite{DBLP:conf/iclr/0004P21} combines gradient boosted decision trees (GBDT) with GNN.
\end{itemize} 

High-order GNN-based methods include:
\begin{itemize}
    \item \textbf{MotifNet} \cite{DBLP:conf/dsw/MontiOB18} utilizes a Laplacian matrix based on multiple motif-based adjacency matrices as the convolution kernel of the graph, and uses an attention mechanism to select node features.
    \item \textbf{MixHop} \cite{abu2019mixhop} concatenates the aggregated node features from neighbors at different hops in each layer.
    \item \textbf{GDC} \cite{klicpera2019diffusion} utilizes generalized graph diffusion (e.g. Personalized PageRank) to generate a new graph, then uses this new graph to predict rather than the original graph.
    \item \textbf{CADNet} \cite{lim2021class} obtains neighborhood representations by random walks with attention, and incorporates the neighborhood representations via trainable coefficients.
\end{itemize} 

Network embedding-based methods include:
\begin{itemize}	
	\item \textbf{DeepWalk} \cite{DBLP:conf/kdd/PerozziAS14} combines truncated random-walk with skip-gram model to learn node embedding.
	\item \textbf{GraRep}~\cite{DBLP:conf/cikm/CaoLX15} leverages various powers of the adjacency matrix  to capture higher-order node similarity.
	\item \textbf{HOPE}~\cite{DBLP:conf/kdd/OuCPZ016} preserves higher-order proximity in node representations.
    \item \textbf{Node2Vec} \cite{DBLP:conf/kdd/GroverL16} employs biased-random walks, which provide a trade-off between breadth-first (BFS) and depth-first (DFS) graph searches, to learn node embedding.
	\item \textbf{Graph2Vec}~\cite{DBLP:journals/corr/NarayananCVCLJ17} creates WL tree for nodes as features in graphs to decompose the graph-feature co-occurence matrix.
	\item \textbf{NetLSD}~\cite{DBLP:conf/kdd/TsitsulinMKBM18} calculates the heat kernel trace of the normalized Laplacian matrix over a vector of time scales.
	\item \textbf{GL2Vec}~\cite{DBLP:conf/iconip/ChenK19} extends Graph2Vec with edge features by utilizing the line graph.
	\item \textbf{Feather}~\cite{DBLP:conf/cikm/RozemberczkiS20} describes
    node neighborhoods with random walk weights. 
\end{itemize}  

Graph pooling-based methods include:
\begin{itemize}	
    \item \textbf{Graclus}~\cite{DBLP:journals/pami/DhillonGK07} is an alternative of eigen-decomposition to calculate a clustering version of the original graph.
	\item \textbf{GlobalATT}~\cite{DBLP:journals/corr/LiTBZ15} employs gate recurrent unit architectures with global attention to update node latent representations. 
    \item \textbf{EdgePool} \cite{diehl2019towards} extracts graph features by contracting edges and merging the connected nodes uniformly.
    \item \textbf{TopKPool}~\cite{DBLP:conf/icml/GaoJ19} learns a scalar projection score for each node and selects the top $k$ nodes.
    \item \textbf{ASAP}~\cite{DBLP:conf/aaai/RanjanST20} utilizes a self-attention network along with a modified GNN formulation to capture the importance of each node in a given graph.
\end{itemize}



We use the low- and high-order GNN-based approaches for both node classification and graph classification tasks, except that BGNN is used for the node classification task only since its GNN module is designed to provide the gradients generated by the node classification loss during training \cite{DBLP:conf/iclr/0004P21}. We use the following node-level network embedding approaches for the node classification task only: DeepWalk \cite{DBLP:conf/kdd/PerozziAS14}, GraRep~\cite{DBLP:conf/cikm/CaoLX15}, HOPE~\cite{DBLP:conf/kdd/OuCPZ016}, and Node2Vec \cite{DBLP:conf/kdd/GroverL16}. We use all the graph pooling approaches and the following graph-level network embedding approaches for the graph classification task only: Graph2Vec~\cite{DBLP:journals/corr/NarayananCVCLJ17}, NetLSD~\cite{DBLP:conf/kdd/TsitsulinMKBM18}, GL2Vec~\cite{DBLP:conf/iconip/ChenK19}, and Feather~\cite{DBLP:conf/cikm/RozemberczkiS20}.

\subsubsection{Implementation details} 

The configurations of our MGNN as well as GNN-based baselines on the node classification task are as follows. We use 1 GNN layer for Cora and Citeseer datasets, while 2 GNN layers for the other two larger datasets namely Pubmed and Chem2Bio2RDF. In addition, a fully connected layer (FCL) is added after the last GNN layer to further process the node representation matrix. For the graph classification task, we use 3 GNN layers on 3 biochemical graph datasets for MGNN, GNN-based baselines as well as graph pooling-based baselines. Similarly, the node representation matrix after the last GNN layer would be passed through three fully connected layers. We use sum aggregation as the readout operation to derive the embedding for the graph.

For our MGNN, aggregate function AGG in Eq.~\eqref{equ:maf} was sum. The  activation function $\sigma$ in Eq.~\eqref{equ:att} was set as sigmoid for Cora and Citeseer, while it was set as tanh for other datasets \cite{DBLP:journals/corr/abs-1710-10903}. We further set $d_1, d_2, d_3$ in Eq.~\eqref{equ:maf}, the output dimensionality of the GCN layer which is stacked in the first, second, and third MGNN layers, to $16, n_c, n_c$, respectively, where $n_c$ is the number of classes in the corresponding dataset. Next, the dimensionality $d'_l$ in Eq.~\eqref{equ:f1} was set to 6 on each dataset.
We used the Adam optimizer and the learning rate $\eta$ in the optimization algorithm was set as 0.011. The maximum number of training epochs $t$ was set as 3000. In practice, we made use of PyTorch for an efficient GPU-based implementation of Algorithm \ref{alg:mgnn} using sparse-dense matrix multiplications.\footnote{Our source codes and pre-processed datasets are publicly available via https://github.com/DMIRLAB-Group/MGNN}

\begin{table*}[htbp]
  \centering
  \caption{Performance on the node classification task, measured in accuracy. Standard deviation errors are given. The best performance is marked in bold, and the second best is underlined.}

    \begin{tabular}{lcccc}
    \toprule
            & Cora  & Citeseer   & Pubmed  & Chem2Bio2RDF\\
    \midrule
    DeepWalk       &   0.4313 $\pm$ 0.0221    &   0.2732 $\pm$ 0.0216  &  0.4440 $\pm$ 0.0208   & 0.9253  $\pm$ 0.0023\\
    GraRep         &   0.5957 $\pm$ 0.0062    &  0.4220 $\pm$ 0.0022   &  0.6147 $\pm$ 0.0073   &  0.9313 $\pm$ 0.0018\\
    HOPE          &   0.4510 $\pm$ 0.0010    &   0.3180 $\pm$ 0.0021   &  0.4880 $\pm$ 0.0011  & 0.9030  $\pm$ 0.0001\\
    Node2Vec      &  0.7150 $\pm$ 0.0042    &   0.4670 $\pm$ 0.0145    & 0.6788  $\pm$ 0.0063 & 0.9029  $\pm$ 0.0012\\
    \midrule
    GCN            & 0.8595 $\pm$ 0.0207 &   0.7764 $\pm$ 0.0045  & 0.8865 $\pm$ 0.0048   & 0.9371 $\pm$ 0.0017 \\
    GraphSAGE    &   0.8610 $\pm$ 0.0101    &  0.7744 $\pm$ 0.0061   &  \underline{0.8980} $\pm$ 0.0049   & \underline{0.9630} $\pm$ 0.0010\\
    GAT         & 0.8775 $\pm$ 0.0127 &   0.7852 $\pm$ 0.0052   &  0.8840 $\pm$ 0.0079   & 0.9628 $\pm$ 0.0017\\
    GIN         &    0.8107 $\pm$ 0.0188  &   0.7255 $\pm$ 0.0160   &  0.8810 $\pm$ 0.0156  & 0.9205 $\pm$ 0.0129\\
    BGNN      &   0.8470 $\pm$ 0.0143  &    0.7750 $\pm$ 0.0112    &  0.8380 $\pm$ 0.0119    &  0.8746 $\pm$ 0.0115  \\
    \midrule
    MotifNet     & 0.8580 $\pm$ 0.0075 &   0.7750 $\pm$ 0.0071   &  0.8895 $\pm$ 0.0102  & 0.8863 $\pm$ 0.0114\\
    MixHop   &   \underline{0.8803} $\pm$ 0.0120   &     0.7796 $\pm$ 0.0053    &   0.8628 $\pm$ 0.0150   &  \underline{0.9630} $\pm$ 0.0004 \\
    GDC      &  0.8660 $\pm$ 0.0100  &    \underline{0.7854} $\pm$ 0.0061   &   0.8768 $\pm$ 0.0059    &  0.8838 $\pm$ 0.0036 \\
    CADNet    &  0.8612 $\pm$ 0.0131  &    0.7652 $\pm$ 0.0148     &   0.8772 $\pm$ 0.0085    &  0.8287 $\pm$ 0.0258 \\
    \midrule
    MGNN          & \textbf{0.9060} $\pm$ 0.0049 &   \textbf{0.7948} $\pm$ 0.0050  & \textbf{0.9232} $\pm$ 0.0084    & \textbf{0.9870} $\pm$ 0.0021\\
    \bottomrule
    \end{tabular}%
  \label{tab:baselines_ncls}%
\end{table*}%


For the baselines, we tuned their settings empirically. First, for GNN-based methods and graph pooling-based methods, the embedding dimension and dropout \cite{DBLP:journals/corr/abs-1207-0580} rate of these models, were set to 16, 0.5, respectively. GCN, GAT, MotifNet used their default aggregate function and GraphSAGE used max aggregate empirically. The degree of multivariate polynomial filters in MotifNet was set to 1 and utilizes 13 motif-based  adjacency matrices. Considering that GAT concatenates different head outputs, which is similar to MGNN. Therefore, GAT was set to use 13 heads and the embedding dimension is 8. Second, for network embedding-based methods, the embedding dimension of these models was set to 128, and we used the logistic regression model \cite{cox1958regression} as a classifier to evaluate the quality of the embeddings generated by these unsupervised models. 
The other settings for these models largely align with the literature.

Note that, in our experiments, all the methods make use of the same directed/undirected edge information on each dataset. Specifically, Chem2Bio2RDF is a directed graph. The baseline implementations used here are able to deal with directed graphs, in which message propagation follows the given edge directions. Meanwhile, ENZYMES, MUTAG, and AIDS are all undirected graphs, and the original Cora, Citeseer and Pubmed are directed citation graphs. Following standard benchmarking practice, the three citation graphs are treated as undirected 
\cite{wu2020comprehensive,yang2016revisiting}, where a preprocessing step is applied to ignore edge directions for all the methods.  
Note that when an undirected graph is fed into MGNN, MGNN treats each undirected edge as two directed edges in opposite directions.

We adopt a widely-used \emph{accuracy} metric for performance evaluation.  For the node classification task, similar to the experimental setup in \cite{DBLP:conf/iclr/ChenMX18}, we split the dataset into 500 nodes for validation, 500 nodes for testing, and the remaining nodes were used for training, to simulate labeled and unlabeled information. Note that Chem2Bio2RDF is an exception, and we split it into 5000 nodes for validation and 5000 nodes for testing due to its large size. Then we report the average and standard deviation of accuracy scores across the 5 runs with different random seeds. For the graph classification task, similar to the experimental setup in \cite{DBLP:conf/iclr/XuHLJ19}, we perform 5-fold cross validation. For other experiments, we present the average accuracy scores over the 5 runs with various random seeds.

\subsection{Performance Evaluation}
We evaluate the empirical performance of MGNN against the state-of-the-art baselines in Tables ~\ref{tab:baselines_ncls} and \ref{tab:baselines_gcls}.

\subsubsection{\textbf{Comparison to baselines}} 

As shown in Table~\ref{tab:baselines_ncls}, MGNN significantly and consistently outperforms all the baselines on different datasets. In particular, GraphSAGE achieves the second best performance on Pubmed and Chem2Bio2RDF, while MixHop achieves the second best performance on Cora and Chem2Bio2RDF, and GDC achieves the second best performance on Citeseer. Our MGNN is capable of achieving further improvements against GraphSAGE by 2.81\% on Pubmed, against MixHop by 2.92\% on Cora, as well as against GraphSAGE and MixHop by 2.49\% on Chem2Bio2RDF. On Citeseer, MGNN outperforms GDC by 1.20\% in terms of accuracy. Note that the number of edges in Citeseer is small and the occurrences of motifs are limited. Therefore, our MGNN cannot collect as much high-order information as it can on other datasets, and MGNN achieves less improvement on Citeseer than on other datasets.


Similarly, in Table~\ref{tab:baselines_gcls}, MGNN regularly surpasses all the baselines. In particular, GCN achieves the second best performance on AIDS, while GDC achieves the second best performance on MUTAG, and MixHop achieves the second best performance on ENZYMES. MGNN is able to achieve further improvements against GCN by 0.76\% on AIDS, against GDC by 3.18\% on MUTAG and against MixHop by 10.95\% on ENZYMES as shown in Table~\ref{tab:baselines_gcls}. In particular, a graph represents a compound's molecular structure in these three biochemical graph datasets. Any chemical structure can be represented by 13 motifs, which allows our MGNN to identify similar structures among various compounds and boost classification accuracy. For example, both carbon dioxide $CO_2$ and methane $CH_4$ have the motif $M_8$. Moreover, $CH_4$ has six $M_8$ while $CO_2$ has one $M_8$ only, and such difference is useful for graph classification.

\begin{table}[htbp]
  \centering
  \caption{Performance on the graph classification task in terms of accuracy. Standard deviation errors are given.}
  

    \begin{tabular}{lccc}
    \toprule
           & MUTAG & ENZYMES  & AIDS \\
    \midrule
    Graph2Vec        &    0.6650 $\pm$ 0.0087  &   0.2033 $\pm$ 0.0239    &  0.8045 $\pm$ 0.0033\\
    GL2Vec            &    0.6703 $\pm$ 0.0106  &   0.1967 $\pm$ 0.0461 &  0.8225 $\pm$ 0.0065 \\
    NetLSD         &    0.7450 $\pm$ 0.0611  &   0.2136 $\pm$ 0.0461   &  0.9575 $\pm$ 0.0082 \\
    Feather          &    0.7716 $\pm$ 0.0341  &   0.2483 $\pm$ 0.0226  &  0.7930 $\pm$ 0.0019 \\
    \midrule
    Graclus           &    0.7504 $\pm$ 0.0750 &   0.2567 $\pm$ 0.0253  & 0.8640 $\pm$ 0.0398 \\
    ASAP             &    0.7562 $\pm$ 0.0799  &   0.2600 $\pm$ 0.0320 & 0.8960 $\pm$ 0.0279 \\
    EdgePool          &   0.7508 $\pm$ 0.0687 &   0.2500 $\pm$ 0.0449  & 0.8615 $\pm$ 0.0581 \\
    TopKPool        &    0.7238 $\pm$ 0.0527  &   0.2417 $\pm$ 0.0349 &  0.8530 $\pm$ 0.0492\\
    GlobalATT         &   0.7346 $\pm$ 0.0736  &   0.2383 $\pm$ 0.0427  & 0.8390 $\pm$ 0.0248 \\
    \midrule
    GCN             &    0.7555 $\pm$ 0.0651  &   0.2100 $\pm$ 0.0285  &  \underline{0.9895} $\pm$ 0.0091 \\
    GAT               &   0.7391 $\pm$ 0.0315 &   0.1667 $\pm$ 0.0000   &  0.8740 $\pm$ 0.1013 \\
    GraphSAGE       &    0.7984 $\pm$ 0.0526 &   0.2333 $\pm$ 0.0586  & 0.9855 $\pm$ 0.0091 \\
    GIN              & 0.7780 $\pm$ 0.0940     &   0.2630 $\pm$ 0.0330      & 0.9870 $\pm$ 0.0090\\
    \midrule
    MotifNet       &   0.8040 $\pm$ 0.0330 &   0.1770 $\pm$ 0.0140    &  0.9880 $\pm$ 0.0060 \\
    MixHop          &  0.7663 $\pm$ 0.0897   &  \underline{0.2767} $\pm$ 0.0494    &  0.9265 $\pm$ 0.0157\\
    GDC           &    \underline{0.8199} $\pm$ 0.0849  &  0.2633 $\pm$ 0.0126    & 0.8705 $\pm$ 0.0165 \\
    CADNet       &   0.7450 $\pm$ 0.0531  & 0.2267 $\pm$ 0.0273      & 0.7995 $\pm$ 0.0011 \\
    \midrule
    MGNN    &   \textbf{0.8460} $\pm$ 0.0230 & \textbf{0.3070} $\pm$ 0.0300   & \textbf{0.9970} $\pm$ 0.0030 \\
    \bottomrule
    \end{tabular}%
  \label{tab:baselines_gcls}%
\end{table}%

Next, we further compare the robustness of MGNN and the baseline approaches by introducing noise. Specifically, by replacing the original input node features with a 16-dimensional random vector, we first modified Cora and Pubmed which are denoted as Cora-RandomX and Pubmed-RandomX. Then we compared the performance of MGNN and the baselines on the above two modified datasets. As shown in Fig.~\ref{fig:ablation_x}(a)(b), the performance of MGNN and other GNN baselines on Cora-RandomX and Pubmed-RandomX showed signs of deterioration to varying degrees compared with that on Cora and Pubmed, which is intuitive since additional noise is introduced through random features. Importantly, not only does MGNN considerably and continuously exceed all GNN baselines on the accuracy metric but also its rate of decrease is the lowest. This is because MGNN can grasp more high-order structure information with higher discriminative power, which makes MGNN more robust than other standard or motif-based GNNs.

\begin{figure}[t!]
	\centering
	\includegraphics[width=\columnwidth]{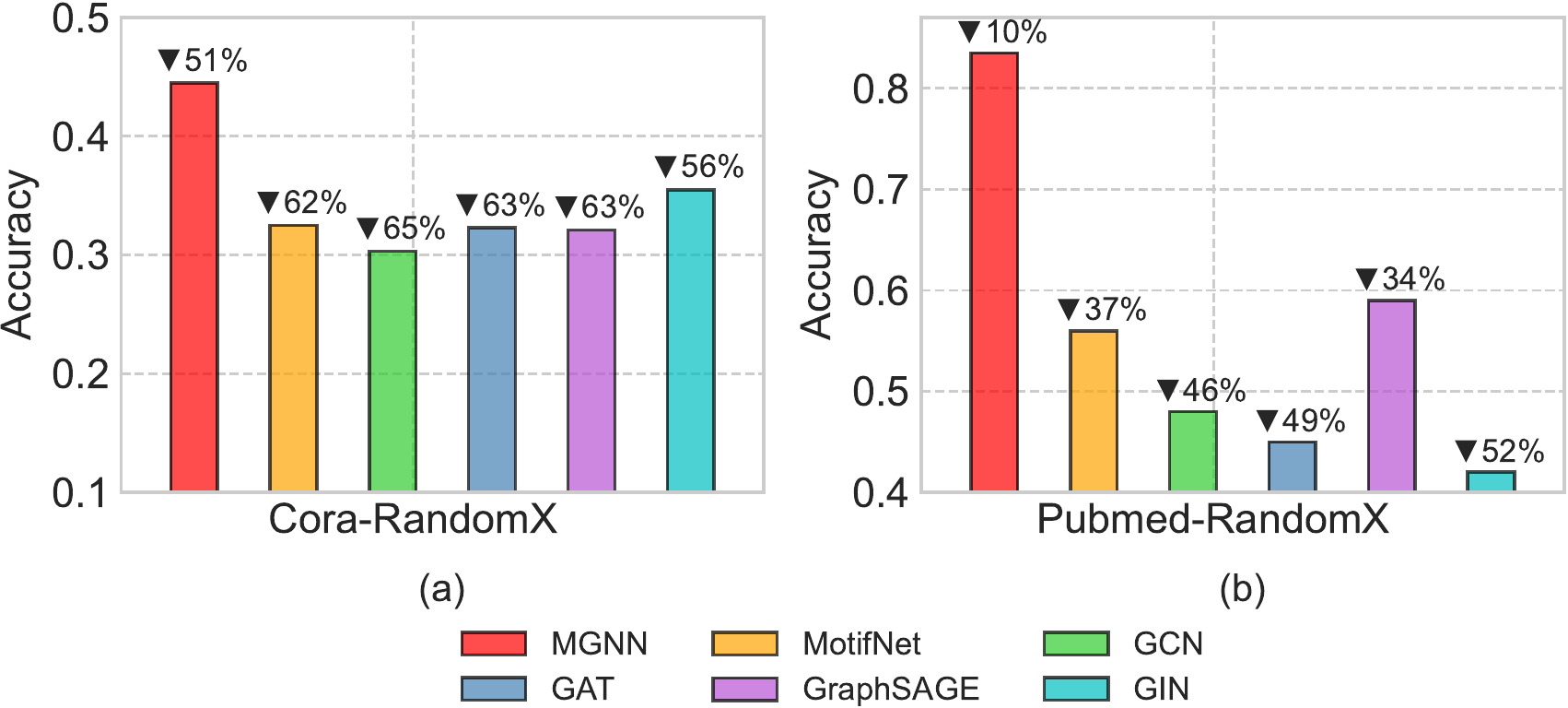} 
	\caption{Performance on robustness comparison in two modified datasets using random vectors as node features. Compared to Cora and Pubmed, the performance degradation rate of models is denoted by the inverted black triangle with the percentage on the top of each bar.}
	\label{fig:ablation_x}
\end{figure}


\subsubsection{Model ablation study}
\begin{figure}
    \centering
    \subfigure[]{
        \includegraphics[width=\columnwidth]{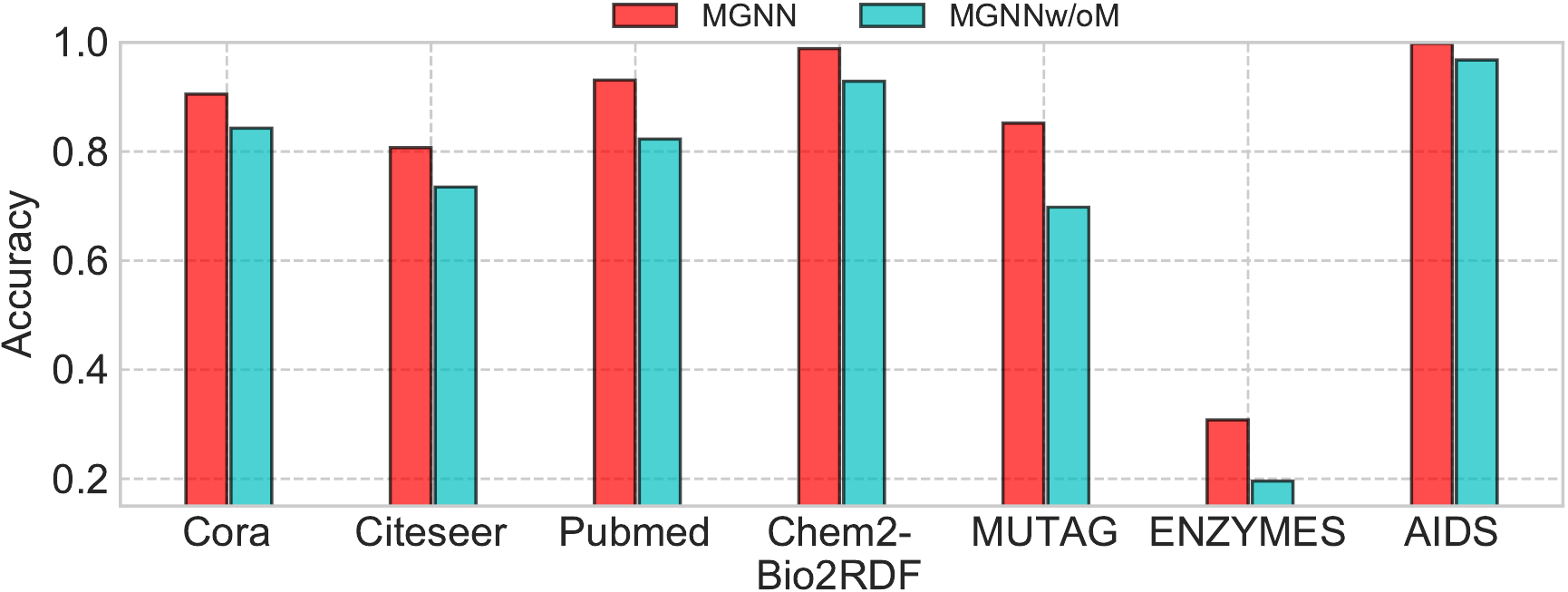}
        \label{fig:ablation_motif}
    }  
    \subfigure[]{
    \includegraphics[width=0.466\columnwidth]{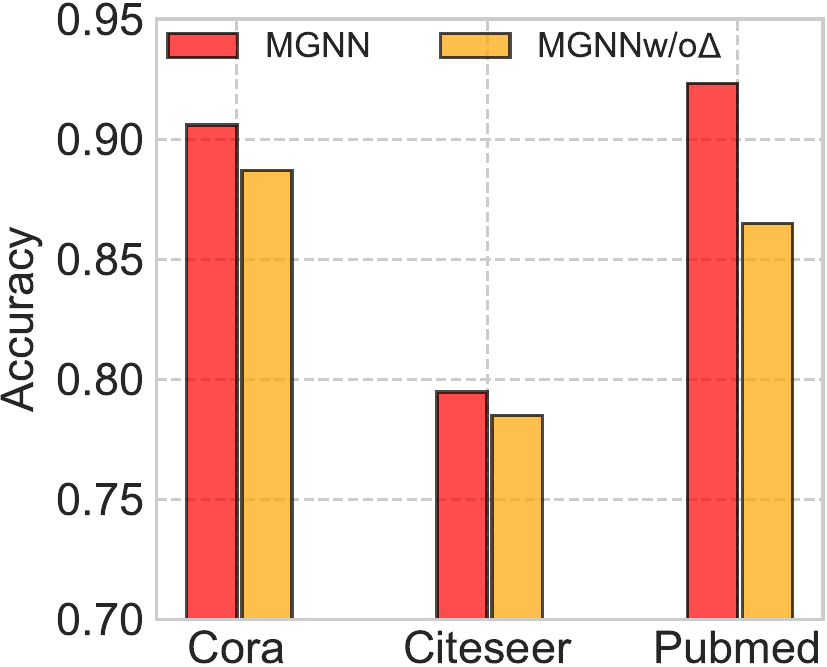}
    \label{fig:ablation_delta}
    }
    \subfigure[]{
    \includegraphics[width=0.466\columnwidth]{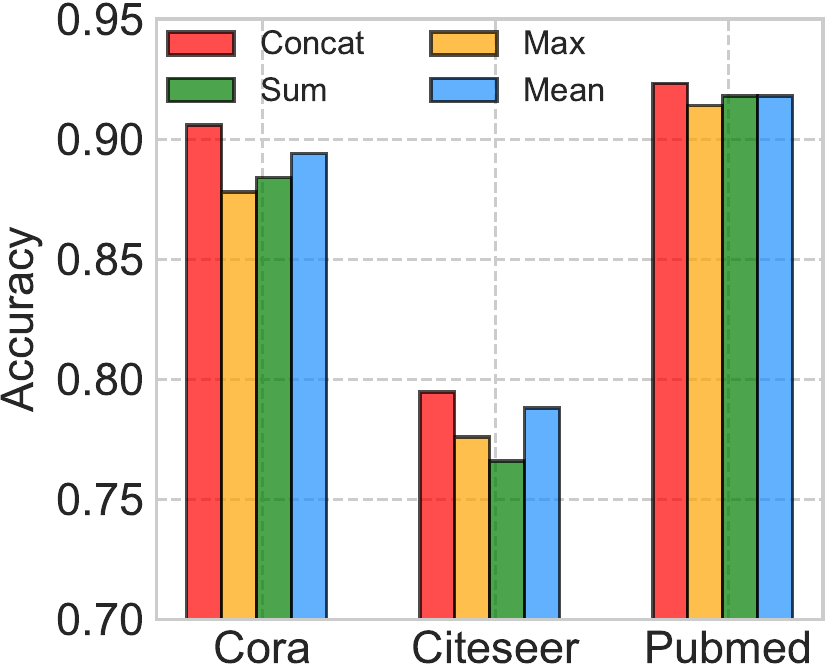}
    \label{fig:ablation_concat}
    }
    \caption{Ablation study on all motifs on seven datasets, minimal-redundancy operator $\Delta$ and injective vector concatenation function on three datasets: (a) the ablation study result related to all motifs on seven datasets and (b)(c): the ablation study results related to $\Delta$ and concatenation function on three datasets, respectively.}
    \label{fig:ablation_mgnn}
\end{figure}

As Fig. \ref{fig:model} illustrates, the network motif, motif redundancy minimization and injective $M_k$-based representation concatenation are key components in our proposed MGNN. We can thus derive the following variants of MGNN: (1) MGNN without any motif information denoted as MGNNw/oM (this variant is actually a GCN); (2) MGNN without motif redundancy minimization operator $\Delta$ denoted as MGNNw/o$\Delta$; (3) MGNN with other functions for feature vector combination, including summation, max and mean. To show the impact of motifs, $\Delta$ and injective concatenation in MGNN, we compare MGNN with the above variants.

In Fig. \ref{fig:ablation_mgnn}, we observe that MGNN achieves better performance than the variants in terms of accuracy, demonstrating the effectiveness of motifs, $\Delta$ and injective concatenation. Firstly, in order to demonstrate the impact of the motif on MGNN's performance, we compare MGNN with the variant MGNNw/oM. As Fig.~\ref{fig:ablation_motif} shows, we observe that the performance of MGNNw/oM is lower than that of MGNN on all the 7 datasets, demonstrating the importance of incorporating motif into GNNs, that is, the high-order structures grasped by motif are important for GNNs' performance. Secondly, in order to investigate the impact of $\Delta$, we removed the minimal-redundancy operator $\Delta$ on MGNN, and the comparison between MGNN and MGNNw/o$\Delta$ is as Fig.~\ref{fig:ablation_delta} illustrates. As can be seen, MGNN significantly outperforms MGNNw/o$\Delta$ in terms of accuracy in three datasets. Note that the number of edges in Pubmed is large and the redundancy of motifs is probably higher than other datasets (i.e, different motifs in Pubmed share more certain substructures). Therefore, on Pubmed, MGNNw/o$\Delta$ is more difficult to distinguish between different motif-wise representations than on other datasets, and the performance gap between MGNN and MGNNw/o$\Delta$ is more pronounced on Pubmed than on other datasets. Thirdly, to demonstrate the impact of injective concatenation, we used other non-injective vector combination functions, including summation, max, and mean, to replace injective concatenation. Fig.~\ref{fig:ablation_concat} illustrates that MGNN with concatenation performs significantly better than MGNN with other functions on these three datasets. Moreover, we show the results of different combination functions (namely, concatenation, max, sum and mean) on three datasets in Table \ref{tab:ablation_mgnn}. We can observe that the performance decline is larger on Citeseer and Cora than on PubMed. A potential reason is Cora and Citeseer are very sparse and the occurrences of motifs are limited. Thus, the limited number of motifs on Cora and Citeseer would make it more difficult to distinguish among different node representations using non-injective functions.

\begin{table}[htbp]
  \centering
  \addtolength{\tabcolsep}{-1mm}
  \caption{Performance of MGNN by using different combination functions on three datasets. The rates of decline in performance w.r.t.~concatenation are given in parentheses.}  \label{tab:ablation_mgnn}
    \begin{tabular}{lcccc}
    \toprule
          & Concat & Max   & Sum   & Mean \\
    \midrule
    Cora  & 0.906 & 0.878 (3.09\% $\downarrow$) & 0.884 (2.43\% $\downarrow$) & 0.894 (1.32\% $\downarrow$) \\
    Citeseer & 0.795 & 0.776 (2.37\% $\downarrow$) & 0.766 (3.62\% $\downarrow$) & 0.788 (0.86\% $\downarrow$) \\
    Pubmed & 0.923 & 0.914 (1.00\% $\downarrow$) & 0.918 (0.56\% $\downarrow$) &  0.918 (0.56\% $\downarrow$) \\
    \bottomrule
    \end{tabular}%
\end{table}%

\subsection{Case Study}\label{sec:case}
In this section, we investigate the importance of different motifs for prediction and demonstrate the necessity of using the high-order structure for prediction. We completed the following two studies on the Chem2Bio2RDF dataset. First, MGNN makes a prediction across the 5 runs by using only 1 out of the 13 motifs and then compares the results to determine the significance of various motifs. Second, we take protein-disease association prediction as our case study. In particular, we rank the protein-disease pairs based on their predicted scores, then identify those top pairs supported by existing publications. Meanwhile, we also evaluate the performance of MGNN versus the baseline approaches for protein-disease association prediction. Next, we show the details of these two studies. 

\subsubsection{The importance of different motifs for prediction} To demonstrate the importance of different motifs, MGNN utilizes just one motif to conduct node classification across the 5 runs on the Chem2Bio2RDF network, and the performance of MGNN is used to assess the importance of each motif in Table \ref{tab:important_motif}.

\begin{table}[htbp]
  \centering
  \caption{Importance ranking of 13 motifs on Chem2Bio2RDF. The importance score of each motif is the performance of MGNN when using only that motif for prediction. The symbol `ALL' indicates that all motifs are used by MGNN.}
    \begin{tabular}{ccccccc}
    \toprule
    Rank  & Motif & ACC & & Rank  & Motif & ACC\\
    \midrule
    1     & M3    & 0.9809 & & 8  & M1    & 0.9686\\
    2     & M13   & 0.9802 & & 9  & M8    & 0.9477\\
    3     & M7    & 0.9791 & & 10  & M10    & 0.9408\\
    4     & M2    & 0.9789 & & 11  & M12    & 0.9377\\
    5     & M11   & 0.9781 & & 12  & M9    & 0.9317\\
    6     & M5    & 0.9762 & & 13  & M6    & 0.9271\\
    7     & M4    & 0.9717 & & \verb|-|  & ALL    & 0.9870\\
    \bottomrule
    \end{tabular}%
  \label{tab:important_motif}%
\end{table}%

As shown in Table \ref{tab:important_motif}, we can draw two conclusions. First, for a Chem2Bio2RDF network, the importance of different motifs varies. This is because important motifs often serve as building blocks within a network, and can even be used to define universal classes of the network they are in \cite{milo2002network}. For example, $M_{13}$ is a building block of the protein-protein interaction (PPI) network on the Chem2Bio2RDF graph (protein$\leftrightarrow$protein$\leftrightarrow$protein), and the PPI network is key for protein-disease association prediction, $M_{13}$ is thus ranked as one of top 3 most significant motifs on this dataset as shown in Table \ref{tab:important_motif}. Another example is the triangular motifs ($M_1$-$M_7$), which are essential in social networks due to their triadic closure nature \cite{benson2016higher, granovetter1973strength}. Second, the performance of the top three motifs is similar to the performance of all motifs combined (last row on the right). This is because a single motif may effectively encapsulate all of the network's essential information. With these two conclusions, we can see that one of the advantages of MGNN is its generality. That is, even if the importance of motifs is unknown, we can still use MGNN with all the motifs to achieve a final performance similar to that of using important motifs only. 

To further demonstrate the adaptive selection results of our motif redundancy minimization operator, we conduct the following experiments on Chem2Bio2RDF. We randomly sample 15 nodes and presented the representations of the top 3 and bottom 3 most significant motifs in Table \ref{tab:important_motif} by heatmap. As shown in Fig.~\ref{fig:heatmap}, representations w.r.t. unimportant motifs ($M_9$, $M_6$) are more sparse than other motifs. In addition, we observed that there are often only no more than three non-zero dimensions for a motif, which shows that MGNN actually needs a very low dimension to capture high-order structures.



\begin{figure}[b]
	\centering
	\includegraphics[width=\columnwidth]{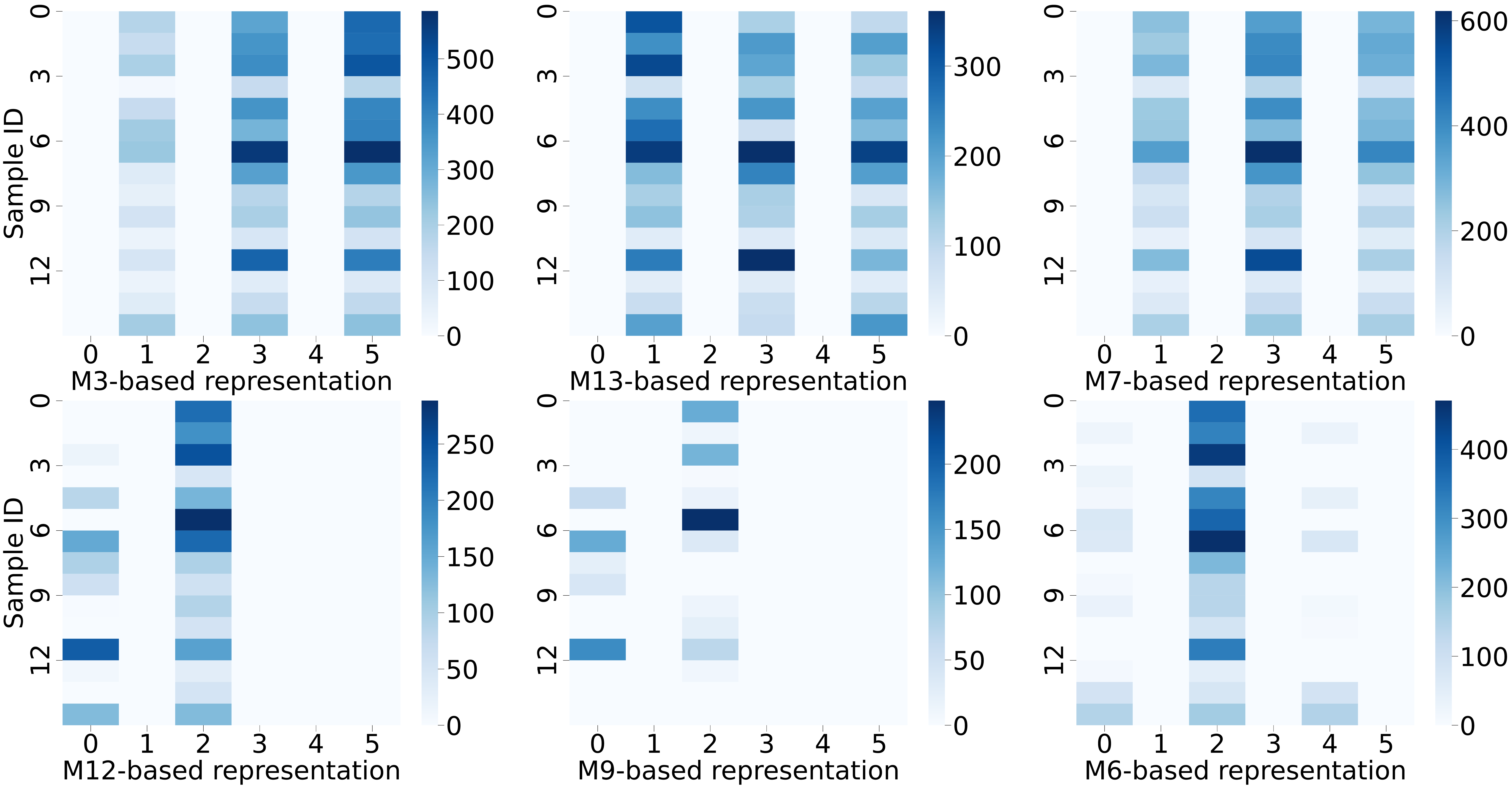} 
	\caption{Heatmap of the top 3 ($M_3, M_{13}, M_7$) and bottom 3 ($M_{12}, M_9, M_6$) most important motif-based representations on Chem2Bio2RDF.}
	\label{fig:heatmap}
\end{figure}


\begin{figure}[t!]
	\centering
	\includegraphics[width=0.65\columnwidth]{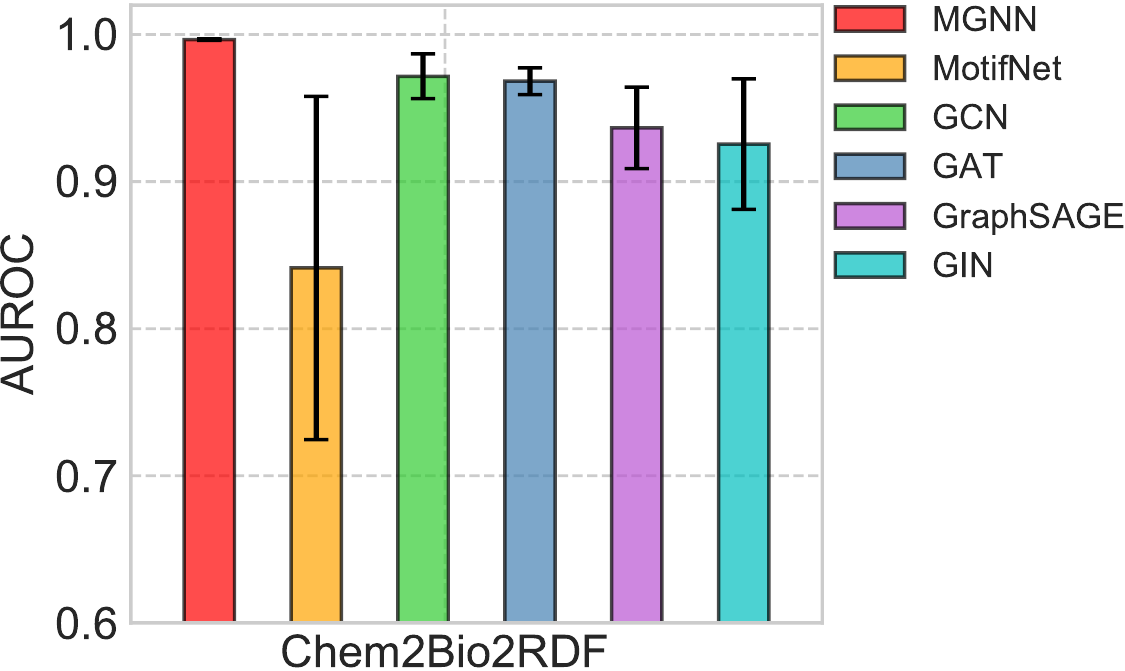} 
	\caption{Performance on protein-disease association prediction in Chem2Bio2RDF dataset, measured in AUROC. Standard deviation errors are given.}
	\label{fig:link_pred}
\end{figure}

\begin{table}[htbp]
  \centering
  \caption{Top predicted protein-disease associations with literature support.}
    \begin{tabular}{ccccc}
    \toprule
    Rank & Gene  & Disease &  PubMed ID \\
    \midrule
    1     & COX2  & Colorectal Carcinoma & 26 159 723 \\
    6     & CTNNB1 & Colorectal Carcinoma & 24 947 187 \\
    7     & P2RX7 & Colorectal Carcinoma & 28 412 208 \\
    12    & SMAD3 & Colorectal Carcinoma & 	30 510 241 \\
    16    & HRH1  & Colorectal Carcinoma & 30 462 522 \\
    37    & ABCB1 & Colorectal Carcinoma & 28 302 530 \\
    44    & AKT1  & Malignant neoplasm of breast & 29 482 551 \\
    64    & TP53  & Malignant neoplasm of breast & 31 391 192 \\
    82    & EP300 & Colorectal Carcinoma & 23 759 652 \\
    92    & ADORA1 & Colorectal Carcinoma & 27 814 614 \\
    107   & REN   & Renal Tubular Dysgenesis & 21 903 317 \\
    141   & FGFR2 & Autosomal Dominant & 16 141 466 \\
    157   & BCL2  & Non-Hodgkin Lymphoma & 29 666 304 \\
    169   & NOS2  & Malignant neoplasm of breast & 20 978 357 \\
    263   & CTNNB1 & Mental retardation & 24 614 104 \\
    \bottomrule
    \end{tabular}%
  \label{tab:ranking}%
\end{table}%

\subsubsection{The necessity of incorporating high-order structure information for prediction} We take protein-disease association prediction on the Chem2Bio2RDF dataset as an example to demonstrate the necessity of incorporating higher-order information from two aspects, i.e., illustration of validity and illustration of practicality. For an illustration of validity, we train MGNN and compare it to the baseline methods for protein-disease association prediction in terms of the area under the ROC curve (AUROC). For an illustration of practicality, we rank all the protein-disease pairs based on their predicted scores and then identify top pairs supported by existing publications.

MGNN is first trained to predict each protein-disease pair's associated score and compare it to the baseline approaches. Specifically, we will describe this step in detail by stating the background of the task as well as the specific experimental setup. As for the background of the task, protein-disease association prediction is a significant issue with the potential to give clinically actionable insights for disease diagnosis, prognosis, and treatment \cite{agrawal2018large}. The issue can be defined as predicting which proteins are associated with a given disease. Experimental methods and computational methods are the two primary kinds of current attempts to solve this challenge. Experimental methods for gene–disease association, such as genome-wide association studies (GWAS), and RNA interference (RNAi) screens, are costly and time-consuming to conduct. Therefore, a variety of computational methods have been developed to discover or predict gene–disease associations, including text mining, network-based methods \cite{ata2021recent}, and so on. Among them, network-based methods often need to use the structure information of the PPI network (constructed by $M_{13}$ motif). However, high-order PPI network structure is largely ignored in protein-disease discovery nowadays \cite{agrawal2018large}. Our MGNN thus can overcome this limitation.

We next describe the experimental setup. We mapped a protein to the gene that it is produced by, and viewed protein-disease association prediction as a link prediction task on the graph \cite{agrawal2018large}. We split the edges of the Chem2Bio2RDF dataset with the ratio of 85\%/5\%/10\% for training, validation and testing respectively. We adopted an inner product decoder for link prediction. The parameters of the model were optimized using negative sampling and cross-entropy loss and we used AUROC as a metric.
The number of the epoch was set to 1000 and the other hyperparameter settings are consistent with the node/graph classification task. Note that, the Chem2Bio2RDF dataset is missing a semantic mapping to disease IDs. To alleviate this problem, we search for genes associated with the disease (2929 known gene(protein)-disease links in Chem2Bio2RDF), and perform gene-disease association queries in the public database DisGeNET\footnote{https://www.disgenet.org/}, so as to realize the inference of the actual semantics of the disease IDs. Fig.~\ref{fig:link_pred} compares the performance of MGNN and other GNN-based methods under five random seeds. As can be seen, MGNN exceeds all GNN baselines on the AUROC metric. Importantly, the AUROC of MGNN is close to 100\%, and the standard deviation is very small, which means that MGNN has strong practicability in protein-disease association prediction.

For an illustration of practicality, we further ranked the whole unknown protein-disease pairs (over 28 million unknown pairs) based on their predicted scores, and identified 103 out of the top 1000 pairs that are supported by existing publications. Table \ref{tab:ranking} displays the first 15 of these 103 pairs, and the last column provides the PubMed ID of the publications that support our prediction.

As shown in Table \ref{tab:ranking}, all pairs have been validated by wet-labs and can be found in the DisGeNET database, e.g. row 2 (CTNNB1, Colorectal Carcinoma) is validated by RNAi screening \cite{tiong2014csnk1e}, and row 4 (SMAD3,Colorectal Carcinoma) is validated by GWAS \cite{huyghe2019discovery}. These methods predict protein-disease association from the angles which are orthogonal from MGNN. Therefore, we consider that they provide reasonable supports for our prediction. Taking the first protein-disease pair as an example, COX2 and Colorectal Carcinoma are reported in \cite{ahmed2015co} (i.e. PubMed ID: 26159723). In fact, COX2 is preferentially expressed in cancer cells and its expression is enhanced by proinflammatory cytokines and carcinogens \cite{ahmed2015co}. It is thus reasonable to predict a protein-disease association between them because there is evidence that the over-expression of COX2 is related to the infiltrating growth of Colorectal Carcinoma and other pathological characteristics \cite{tsunozaki2002cyclooxygenase}.

\subsection{Parameter Sensitivity}
\begin{figure}[t!]
	\centering
	\includegraphics[width=\columnwidth]{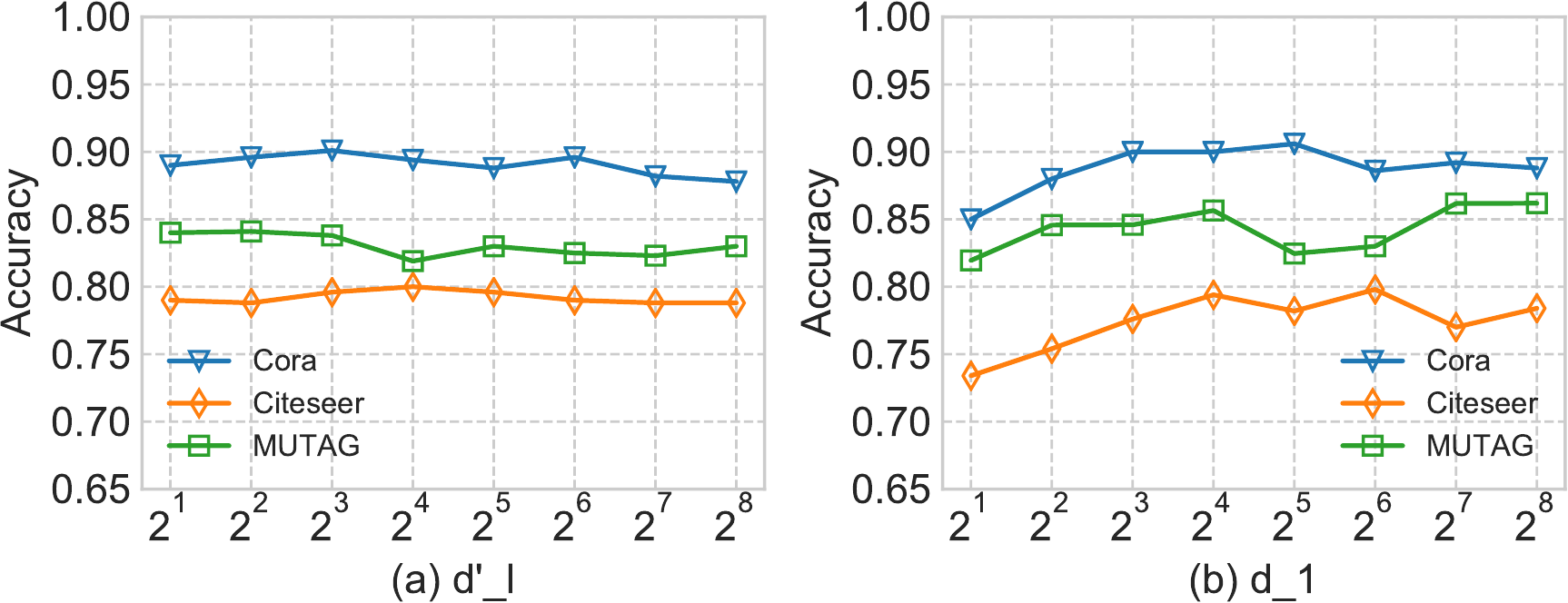} 
	\caption{Parameter sensitivity analysis for MGNN: (a) dimensionality $d^\prime_l$ in Eqs.~\eqref{equ:f1}--\eqref{equ:f2}.  (b) output dimensionality $d_1$ in Eq.~\eqref{equ:gcn}.}\label{fig:dim_sensitivity}
	
\end{figure}

We present the sensitivity analysis for the dimensionality parameters $d^\prime_l$ and $d_1$ in our MGNN.

In Fig.~\ref{fig:dim_sensitivity}(a), the performance of MGNN is not sensitive to changes in the dimensionality $d_l^\prime$ in Eqs.~\eqref{equ:f1}--\eqref{equ:f2}. Particularly, values of $d_l^\prime$ in the range $[2^2, 2^5]$ typically give a robust and reasonably good performance, e.g., $d^\prime_l = 6$ is a desirable choice in most cases. 
For the output dimensionality $d_1$ in Eq.~\eqref{equ:gcn},  as shown in Fig.~\ref{fig:dim_sensitivity}(b), the performance gradually improves and becomes stable around $2^4$, which is the preferred choice in most cases.

\subsection{Model Size and Efficiency}

\begin{table}[htbp]
  \centering
  \addtolength{\tabcolsep}{-1mm}
  \caption{Model size and efficiency analysis on the Pubmed dataset.}\label{tab:complexity}
    \begin{tabular}{lrrrrr}
    \toprule
     & \multirow{2}{*}{\# Params} & \multicolumn{2}{c}{Training} & Inference & \multirow{2}{*}{Accuracy} \\
     &  &  per epoch/s & overall/min   & /ms &  \\
    \midrule
    GAT        &   104,624    &    \textbf{0.01}   & \textbf{1.13} & \underline{4.00} & 0.8840 \\
    MixHop    &   \textbf{24,144}    &    \underline{0.02}  & 15.01  & 19.48 & 0.8628\\\midrule
    BGNN        &   9,866,596    &    1.97   & 608.64 & 4.57 & 0.8380\\
    EGAT       &   108,107    &  3.72     & 320.29 & 20.63 & 0.8970\\
    ESAGE       &   212,693    &  3.11     & 14.01 & 5.77 & \underline{0.9040}\\
    EGAT+SAGE       &   164,443    &  3.27     &  150.07 & 12.34 & 0.8992\\
    \midrule
    MGNN        &   \underline{26,084}    &    0.04   & \underline{10.00} & \textbf{1.25} & \textbf{0.9232}\\
    \bottomrule
    \end{tabular}%
  
\end{table}%


We evaluate the model size and efficiency of MGNN, in terms of the number of trainable parameters, training time (per epoch and overall), and inference time.
We select a representative baseline from standard GNNs (i.e., GAT) and high-order GNNs (i.e., MixHop), respectively, for comparison to MGNN. Moreover, since MGNN can be viewed as a model that integrates several motif-based modules, we also compare an ensemble GNN here (i.e., BGNN).  

For a more comprehensive comparison, we also develop a simple ensemble framework over 13 GNN modules (corresponding to our 13 motifs). First, we separately apply thirteen GNN modules that employ different initializations but otherwise the same input, and fuse their output by a fully connected layer. All hidden dimensions are set to 16. For the above framework, we develop three variants, respectively, the modules use only 13 GATs, only 13 GraphSAGEs as well as 7 GraphSAGEs and 6 GATs, denoted as EGAT and ESAGE, and EGAT+SAGE, respectively. 


As shown in Table \ref{tab:complexity}, 
MGNN is competitive in terms of model size and efficiency, while achieving the best accuracy. 
In particular, although several ensemble methods including EGAT, ESAGE and EGAT+SAGE achieve better accuracies among the baselines, their model sizes or efficiency are all worse than MGNN.
Note that the per epoch and overall training times are often inconsistent across methods, as a method may train faster per epoch but it converges slower, or vice versa. 
Further experiments involving ensemble GNNs on all datasets are presented in Section II of our supplementary materials.

\section{Conclusion}
We propose Motif Graph Neural Networks, a novel  framework to better capture high-order structures. Different from previous work, we propose the motif redundancy minimization operator and injective motif combination to improve the discriminative power of GNNs on the high-order structure. We also propose an efficient manner to construct a motif-based adjacency matrix. Further, we theoretically show that MGNN is provably more expressive than standard GNN, and standard GNN is in fact a special case of MGNN. Finally, we demonstrate that MGNN outperforms all baselines on seven public benchmarks.

\bibliographystyle{IEEEtran}
\bibliography{main}

\appendix

\begin{table*}[htbp]
  \centering
  \caption{Performance comparison on the node classification task, measured in accuracy. Standard deviation errors are given.}
  \label{tab:ensemble_nc}
    \begin{tabular}{lcccl}
    \toprule
           & Cora  & Citeseer  & Pubmed  & \multicolumn{1}{c}{Chem2Bio2RDF} \\
    \midrule
    GCN         & 0.8595 $\pm$ 0.0207 &   0.7764 $\pm$ 0.0045   & 0.8865 $\pm$ 0.0048      & 0.9371 $\pm$ 0.0017 \\
    GraphSAGE   &   0.8610 $\pm$ 0.0101    &  0.7744 $\pm$ 0.0061    &  0.8980 $\pm$ 0.0049     & 0.9630 $\pm$ 0.0010\\
    GAT        & 0.8775 $\pm$ 0.0127 &   \underline{0.7852} $\pm$ 0.0052  &  0.8840 $\pm$ 0.0079     & 0.9628 $\pm$ 0.0017\\
    GIN        &    0.8107 $\pm$ 0.0188  &   0.7255 $\pm$ 0.0160   &  0.8810 $\pm$ 0.0156      & 0.9205 $\pm$ 0.0129\\
    \midrule
    BGNN  & 0.8470 $\pm$ 0.0143 & 0.7750 $\pm$ 0.0112 & 0.8380 $\pm$ 0.0119 & 0.8746 $\pm$ 0.0115 \\
    EGAT  & 0.8720 $\pm$ 0.0040 & 0.7220 $\pm$ 0.0060 & 0.8970 $\pm$ 0.0010 &
    0.9658 $\pm$ 0.0040 \\
    ESAGE  & 0.8612 $\pm$ 0.0135 & 0.7604 $\pm$ 0.0171 & \underline{0.9040} $\pm$ 0.0102 & 0.9633 $\pm$ 0.0019\\
    EGAT+SAGE  & \underline{0.8792} $\pm$ 0.0102 & 0.7632 $\pm$ 0.0141 & 0.8992 $\pm$ 0.0109 & \underline{0.9663} $\pm$ 0.0010\\
    \midrule
    MGNN   & \textbf{0.9060} $\pm$ 0.0049 & \textbf{0.7948} $\pm$ 0.0050 & \textbf{0.9232} $\pm$ 0.0084 & \textbf{0.9870} $\pm$ 0.0021 \\
    \bottomrule
    \end{tabular}%
  
\end{table*}%

\subsection{Proof for Lemma 1}\label{sec:proof}
\begin{proof}
First, we show the relationship between the graph's adjacency matrix and the motif-based adjacency matrix. Then, using this relationship, we finish the proof of the lemma.

On the directed graph $G$ with self-loops, the subgraph composed of any node linked to any two of its neighbors is always an instance of open motif ($M_8$--$M_{13}$). That is, in the adjacency matrix $\mathbf{A}$ of the graph with self-loops, if $(\mathbf{A})_{ij} > 0$, $(\mathbf{A})_{uv} > 0$ where $(i,j)$ and $(u,v)$ are adjacent edges in $G$, then there always exist $k' \in \{8, 9, ..., 13\}$ such that $\mathbf{A}_{k'}$ satisfies $(\mathbf{A}_{k'})_{ij} > 0$, $(\mathbf{A}_{k'})_{uv} > 0$. 
It immediately follows that, on a graph with self-loops, if $(\mathbf{A})_{ij} > 0$, then we also have $(\mathbf{A}_{k'})_{ij} > 0$.
Without loss of generality, we assume $k'=13$ for ease of discussion later. That is, $\forall (i,j) \in \mathcal{E}$, $(\mathbf{A}_{13})_{ij} > 0$.




Next, we use the construction method to complete the proof of this lemma. Based on Table~I, an instance of standard GNN is in the form of $\tilde{\mathbf{h}}^{(l)}_v = \sigma (\omega \left(\left\{ (\mathbf{A})_{vi} \mathbf{W}_s^{(l)}\tilde{\mathbf{h}}^{(l-1)}_i \big| i \in \mathcal{N}(v) \right\}\right))$. 
We will use the following steps to find a special case of MGNN which have the same representational capacity as standard GNN. 

First, this special case of MGNN must satisfy the following equation. 
\begin{equation}\label{equ:raw_proof_target}
\begin{aligned}
&\big\|_{k=1}^{13} \sigma(\omega(\{ \alpha^{(l)}_{k,vi}\cdot(\mathbf{A}_k)_{vi} \mathbf{W}_m^{(l)}\mathbf{h}_i^{(l-1)} \big| i \in \mathcal{N}(v) \})) \\ 
&= \sigma(\omega( \{\big\|_{k=1}^{12} \mathbf{0}_k \big\|(\mathbf{A})_{vi}\mathbf{W}_s^{(l)}\tilde{\mathbf{h}}^{(l-1)}_i \big| i \in \mathcal{N}(v)\} )),
\end{aligned}
\end{equation}
where $\mathbf{0}_k$ is a $d_l$-dimensional zero vector, $\mathbf{W}_m^{(l)}$ and $\mathbf{W}_s^{(l)} \in \mathbb{R}^{d_l \times d_{l-1}}$,  $\mathbf{h}^{(l-1)}_i$ and $\tilde{\mathbf{h}}^{(l-1)}_i \in \mathbb{R}^{d_{l-1}}$, so that the dimensions on both sides of Eq.~\eqref{equ:raw_proof_target} are the same. That is, the output dimensions of the special case of MGNN
and standard GNN are the same, both being $13 d_l$. 

Next, with $\mathbf{W}_m^{(l)}$ and $\alpha^{(l)}_{k,vi}$ as variables, our goal is to prove that there will always be solutions to $\mathbf{W}_m^{(l)}$ and $\alpha^{(l)}_{k,vi}$ such that Eq.~\eqref{equ:raw_proof_target} holds.

For simplicity, in Eq.~\eqref{equ:raw_proof_target}, we use symbol $\varphi$, a aggregation function with activation, to represent $\sigma \circ \omega$, that is,
\begin{equation}\label{equ:proof_target}
\begin{aligned}
\big\|_{k=1}^{13} &\varphi(\{ \alpha^{(l)}_{k, vi}\cdot(\mathbf{A}_k)_{vi} \mathbf{W}_{m}^{(l)}\mathbf{h}^{(l-1)}_i  \big| i \in \mathcal{N}(v) \})\\ 
=& \varphi( \{\big\|_{k=1}^{12} \mathbf{0}_k \big\|(\mathbf{A})_{vi}\mathbf{W}_{s}^{(l)}\tilde{\mathbf{h}}^{(l-1)}_i \big| i \in \mathcal{N}(v)\} ).
\end{aligned}
\end{equation}


In the left hand side (LHS) of Eq.~\eqref{equ:proof_target}, the result will not change if the order of concatenation operation and aggregation $\varphi$ is exchanged. This is because the result value for each dimension in the LHS is only aggregated from the values of the same dimension in different feature vectors, and each feature vector is completely preserved after concatenation is performed. Thus, 
the LHS of Eq.~\eqref{equ:proof_target} becomes
\begin{equation}\label{equ:proof_target2}
 \varphi(\{ \big\|_{k=1}^{13} \alpha^{(l)}_{k,vi}\cdot(\mathbf{A}_k)_{vi}\mathbf{W}_{m}^{(l)}\mathbf{h}^{(l-1)}_i  \big| i \in \mathcal{N}(v)\}).
\end{equation}
By combining Eq.~\eqref{equ:proof_target}--\eqref{equ:proof_target2}, 
we get the equivalent form of Eq.~\eqref{equ:raw_proof_target}:
\begin{equation}\label{equ:proof_target4}
\begin{aligned}
&\varphi(\{  \big\|_{k=1}^{13} \alpha^{(l)}_{k, vi}\cdot(\mathbf{A}_k)_{vi}\mathbf{W}_{m}^{(l)}\mathbf{h}^{(l-1)}_i  \big| i \in \mathcal{N}(v)\}) \\ 
=& \varphi(\{\big\|_{k=1}^{12} \mathbf{0}_k \big\|(\mathbf{A})_{vi} \mathbf{W}_{s}^{(l)}\tilde{\mathbf{h}}^{(l-1)}_i \} \big| i \in \mathcal{N}(v)\}).
\end{aligned}
\end{equation}



Therefore, our goal now is to prove that there will always be solutions such that Eq.~\eqref{equ:proof_target4} holds. We can solve for the following Eqs.~\eqref{equ:final_target1}--\eqref{equ:final_target2} to ensure that Eq.~\eqref{equ:proof_target4} holds. For $k \in \{1, ..., 12\}$,
\begin{equation}\label{equ:final_target1}
\alpha^{(l)}_{k,vi} \cdot (\mathbf{A}_k)_{vi} \mathbf{W}_m^{(l)}\mathbf{h}^{(l-1)}_i = \mathbf{0}_k,
\end{equation}
and for $k=13$,
\begin{equation}\label{equ:final_target2}
\alpha^{(l)}_{13,vi} \cdot (\mathbf{A}_{13})_{vi} \mathbf{W}_m^{(l)}\mathbf{h}^{(l-1)}_i =(\mathbf{A})_{vi} \mathbf{W}^{(l)}_s  \tilde{\mathbf{h}}^{(l-1)}_i.
\end{equation}

Then we will demonstrate that $\forall l \ge 1$, there will always be solutions to $\mathbf{W}_m^{(l)}$ and $\alpha^{(l)}_{k,vi}$, such that Eqs.~\eqref{equ:final_target1}--\eqref{equ:final_target2} holds. Specifically, when $l=1$, $\mathbf{h}^{(0)}_i = \tilde{\mathbf{h}}^{(0)}_i = \mathbf{x}_i$, allowing Eqs.~\eqref{equ:final_target1}--\eqref{equ:final_target2} to hold for $\mathbf{W}_m^{(l)} = \frac{(\mathbf{A})_{vi}}{\alpha^{(l)}_{13,vi} \cdot (\mathbf{A}_{13})_{vi}} \mathbf{W}_s^{(l)}$, $\alpha^{(l)}_{13,vi} \ne 0$ and $\alpha^{(l)}_{k,vi} = 0$ ($k \in \{1, ..., 12\}$), that is, $1$-th special case of MGNN layer can generate the same vector representation  as $1$-th standard GNN layer since both models have the same output in the previous layer (i.e., $\mathbf{h}^{(0)}_i = \tilde{\mathbf{h}}^{(0)}_i$). Similarly, when $l > 1$, Eqs.~\eqref{equ:final_target1}--\eqref{equ:final_target2} holds. This finishes the proof of the lemma.
\end{proof}

\subsection{Performance Evaluation of Ensemble GNNs}\label{sec:ensemble}

\begin{table}[htbp]
  \centering
  \caption{Performance comparison on the graph classification task, measured in accuracy. Standard deviation errors are given.
  \label{tab:ensemble_gc}}
    \begin{tabular}{lccc}
    \toprule
           & MUTAG & ENZYMES & AIDS \\
    \midrule
    GCN          &    0.7555 $\pm$ 0.0651  &   0.2100 $\pm$ 0.0285     &  \underline{0.9895} $\pm$ 0.0091 \\
    GAT            &   0.7391 $\pm$ 0.0315  &   0.1667 $\pm$ 0.0000     &  0.8740 $\pm$ 0.1013 \\
    GraphSAGE    &    \underline{0.7984} $\pm$ 0.0526 &   0.2333 $\pm$ 0.0586     & 0.9855 $\pm$ 0.0091 \\
    GIN          & 0.7780 $\pm$ 0.0940    &   0.2630 $\pm$ 0.0330          & 0.9870 $\pm$ 0.0090\\
    \midrule
    EGAT   & 0.7820 $\pm$ 0.0610 & 0.2420 $\pm$ 0.0450 & 0.9850 $\pm$ 0.0050 \\
    ESAGE   & 0.7350 $\pm$ 0.0650 & \underline{0.2670} $\pm$ 0.0560 & 0.9850 $\pm$ 0.0060 \\
    EGAT+SAGE  & 0.7340 $\pm$ 0.0320 & 0.2500 $\pm$ 0.0480 & 0.9840 $\pm$ 0.0070 \\
    \midrule
    MGNN  & \textbf{0.8460} $\pm$ 0.0230 & \textbf{0.3070} $\pm$ 0.0300 & \textbf{0.9970} $\pm$ 0.0030 \\
    \bottomrule
    \end{tabular}%
\end{table}%

We evaluate the empirical performance of MGNN against
ensemble GNNs and standard GNNs in Table~\ref{tab:ensemble_nc} and Table~\ref{tab:ensemble_gc}.

As shown in Table~\ref{tab:ensemble_nc}, MGNN significantly and consistently outperforms all the baselines on different datasets.  In particular, ESAGE achieves the second best performance on Pubmed, while EGAT+SAGE achieves the second best performance on Cora and Chem2Bio2RDF. On Citeseer, GAT achieves the second best performance. MGNN is able to achieve further improvements against ESAGE by 2.12\% on Pubmed, against GAT by 1.22\% on Citeseer, as well as against EGAT+SAGE by 3.05\% and 2.14\% on Cora and Chem2Bio2RDF respectively.

In Table \ref{tab:ensemble_gc}, similarly,  MGNN regularly surpasses all baselines. In particular, ESAGE achieves the second best performance on ENZYMES, while GraphSAGE achieves the second best performance on MUTAG and GCN achieves the second best performance on AIDS.  Our MGNN is capable of achieving further improvements against ESAGE by 14.98\% on ENZYMES,  as well as against GraphSAGE and GCN by 5.96\% on MUTAG and by 0.76\% on AIDS, respectively.


\subsection{Efficiency Analysis of Motif-based Adjacency Matrix Construction}\label{sec:enumerate}

\begin{table}[htbp]
  \centering
  \chen{\caption{The efficiency analysis of three methods for constructing motif-based adjacency matrix, in terms of the running time (seconds). `MatMul' denotes matrix multiplication method.}\label{tab:enumerate}}%
    \begin{tabular}{lrcrrr}
    \toprule
          & \multicolumn{1}{c}{\multirow{2}[4]{*}{\# Nodes}} & \multicolumn{1}{c}{Closed Motif: M1} &       & \multicolumn{2}{c}{Open Motif: M13} \\
\cmidrule{3-3}\cmidrule{5-6}          &       & \multicolumn{1}{c}{MatMul [10]} &       & \multicolumn{1}{c}{Enumerate} & \multicolumn{1}{c}{\makecell{Non-\\enumerate}} \\
    \midrule
    Cora  & 2,708 & 0.003  &       & 73.322  & 1.534  \\
    Pubmed & 19,717 & 0.027  &       & 4249.435  & 18.852  \\
    \makecell{Chem2-\\Bio2RDF} & 295,911 & 0.228  &       & 1226K & 69.353  \\
    \bottomrule
    \end{tabular}%
\end{table}%

\chen{We evaluate the efficiency of MatMul \cite{zhao2018ranking} for closed motifs and our proposed non-enumeration method for open motifs, in terms of the running time, in Table \ref{tab:enumerate} below. For open motifs, we would compare the running time of both enumeration and non-enumeration methods.}

\chen{As shown in Table \ref{tab:enumerate}, it can be observed that MatMul can run very fast for closed motifs even for large-scale graphs, such as Chem2Bio2RDF. Meanwhile, compared to the standard enumeration method, our proposed non-enumeration method performs much better for open motifs. Even for Chem2Bio2RDF dataset, our non-enumeration can still run quite fast, taking about 69 seconds to construct the adjacency matrix for the open motif $M_{13}$. These results demonstrate that our preprocessing for both closed and open motifs is efficient.}



\subsection{Performance and efficiency analysis of MGNN using all motifs}

\begin{table}[htbp]
  \centering
  \chen{\caption{Performance and efficiency analysis of MGNN using all motifs or not, measured in accuracy and overall training time (minutes). `(M7, M8, M9)' denotes that MGNN uses only $M_7$, $M_8$ and $M_9$ motifs.}\label{tab:part_motif}}
    \begin{tabular}{lrrrrrr}
    \toprule
          &       & \multicolumn{2}{c}{ACC} &       & \multicolumn{2}{c}{Overall/min} \\
\cmidrule{3-4}\cmidrule{6-7}          & \multicolumn{1}{l}{\# Nodes} & (M7, M8, M9) & ALL   &       & (M7, M8, M9) & ALL \\
    \midrule
    Cora  & 2,708 & 0.8732 & 0.9060  &       & 0.87  & 1.37 \\
    CiteSeer & 3,327 & 0.7224 & 0.7948 &       & 0.80  & 1.29 \\
    PubMed & 19,717 & 0.4220  & 0.9232 &       & 5.73  & 10.00 \\
    \makecell{Chem2-\\Bio2RDF}   & 295,911 & 0.9741 & 0.9870  &       & 14.26 & 27.26 \\
    \bottomrule
    \end{tabular}%
\end{table}%

\chen{We compare the performance and efficiency of MGNN using all motifs or not, in terms of accuracy and overall training time. Specifically, we select motifs $M_7$, $M_8$ and $M_9$ which are commonly important in Cora, CiteSeer, PubMed and Chem2Bio2RDF, and make MGNN utilize just the above three motifs to conduct node classification on the four datasets. For simplicity, we denote this variant of MGNN as (M7, M8, M9).}

\chen{As shown in Table \ref{tab:part_motif}, MGNN using all the motifs achieves better accuracy, while (M7, M8, M9) method can clearly save the training time. However, (M7, M8, M9) method achieves lower accuracy than MGNN using all the motifs on all four datasets, showing that these three motifs are not sufficient to capture all the important high-order structures for these four datasets. In addition, we would think the efficiency when using all the motifs is still satisfactory. Even on the largest dataset (i.e., Chem2Bio2RDF), the overall training time for MGNN using all the motifs is just 13 minutes longer than (M7, M8, M9) method, while on other datasets the differences are much smaller.}

\end{document}